\renewcommand{\fnum@figure}{Fig. \thefigure}
\newtheoremstyle{itshape}
{.0\baselineskip\@plus.0\baselineskip\@minus.0\baselineskip}
{.0\baselineskip\@plus.0\baselineskip\@minus.0\baselineskip}
{\itshape}
{}
{\bfseries}
{.}
{ }
{}
\theoremstyle{itshape}
\newtheorem{theorem}{Theorem}
\newtheorem{assumption}{Assumption}
\renewcommand{\algorithmicrequire}{ \textbf{Input:}}
\renewcommand{\algorithmicensure}{ \textbf{Output:}}
\begin{document}
	\title{Accelerating Split Federated Learning over Wireless Communication Networks}
	
	\author{ Ce Xu, Jinxuan Li, Yuan Liu, Yushi Ling, and Miaowen Wen
		
		\thanks{C. Xu, Y. Liu and M. Wen are with school of Electronic and Information Engineering, South China University of Technology, Guangzhou 510641, China (e-mails: eecexu@mail.scut.edu.cn, eeyliu@scut.edu.cn,  eemwwen@scut.edu.cn). J. Li and Y. Ling are with Guangzhou Power Supply Bureau, Guangdong Power Grid Co., Ltd., CSG, Guangzhou 510620, China (e-mails: billywoof@163.com,   546190218@qq.com). \emph{Corresponding author: Yuan Liu.}
			
		This paper was supported in part by the Science and Technology Project of China Southern Power Grid Corporation under Grant GDKJXM20220333.
		}
	}
	
	\maketitle
	
	\vspace{-1.5cm}
	\begin{abstract}

	The development of artificial intelligence (AI) provides opportunities for the promotion of deep neural network (DNN)-based applications. However, the large amount of parameters and computational complexity of DNN makes it difficult to deploy it on edge devices which are resource-constrained. An efficient method to address this challenge is model partition/splitting, in which DNN is divided into two parts which are deployed on device and server respectively for co-training or co-inference. In this paper, we consider a split federated learning (SFL) framework that combines the parallel model training mechanism of federated learning (FL) and the model splitting structure of split learning (SL). We consider a practical scenario of heterogeneous devices with individual split points of DNN. We formulate a joint problem of split point selection and bandwidth allocation to minimize the system latency. By using alternating optimization, we decompose the problem into two sub-problems and solve them optimally. Experiment results demonstrate the superiority of our work in latency reduction and accuracy improvement.
	\end{abstract}
	
	\begin{IEEEkeywords}
		Split federated learning, model splitting, resource allocation.
	\end{IEEEkeywords}

	\section{Introduction}
	\subsection{Background}
	\IEEEPARstart{W}{ith} the rapid development of deep learning\cite{8736011}, many artificial intelligence (AI) applications have been widely used in practice, such as face recognition \cite{masi2018deep}, augment reality \cite{sahu2021artificial} and object detection \cite{szegedy2013deep}. In AI applications, machine learning (ML) models including convolutional neural network (CNN) or deep neural network (DNN) are trained by rich data generated by network edge (like sensors and internet-of-thing (IoT) devices), and the principle is to update the parameters of the model to minimize the error of the output results and finally establish a mapping function to make predictions on unknown data \cite{liu2017survey}. However, data is usually privately sensitive (such as medical and financial data), and as a result, enterprises or individuals may resist sharing their data to service providers to centralized training.
	
	As devices always have a certain level of computing capability, distributed learning enabling local training at device is more attractive since raw data of device is used for local training but kept at device locally to reduce privacy leakage risks. Federated learning (FL) \cite{mcmahan2017communication,li2020federated} is a promising distributed learning framework, in which a ML model is obtained with a central server aggregating local models trained by distributed devices. Specifically, each device first downloads a global model from server. Then each device trains a local model based on its local data and then transmits local model to server. Finally server aggregates all received local models to update global model. The above device-server iteration continues until convergence.

	However, FL brings up new challenges over wireless/edge networks, due to the limited computing and communication power on devices. On the one hand, ML model especially DNN often requires huge amount of parameters and calculation of the model. For example, AlexNet\cite{NIPS2012_c399862d} needs more than 200 MBytes of memory to store parameters, while that of VGG16 \cite{simonyan2014very} is as high as 500 MBytes. The intensive computation of DNN on devices such as IoT sensors and smart phones is not so effective. On the other hand, as DNN parameters are usually high-dimensional and thus the DNN transmission from devices to server suffers from high energy consumption and delay. 
	
	Split learning (SL) \cite{vepakomma2018split} is another distributed learning method to deal with the challenges raised by FL. According to the fact that DNN is composed of multiple layers, DNN can be partitioned into two parts vertically where the front-end part and the back-end part are executed on the device and server respectively. In training phase, device processes its raw data through the front-end part of DNN, and then sends the intermediate result to server to complete the forward propagation stage. Then, in backward propagation stage, server updates the parameters of the back-end part and sends back the intermediate gradients to device to finish the parameters update of the front-end part. In the whole process, the transmission between device and server is only the intermediate result of the split point, instead of the whole DNN parameters. By doing so, most of the computation is offloaded to server whose computing power is much stronger than that of device, so that the computing cost will be greatly saved. In addition, device does not need to transmit huge DNN model parameters but small intermediate result of a certain layer of DNN, the communication load will also be reduced.
	
	By integrating the collaborative training framework of FL and neural network splitting structure of SL, split federated learning (SFL) is showed to be more communication efficient \cite{thapa2022splitfed,turina2021federated,9923620}. However, these work only evaluate the performance of SFL experimentally, how to split DNN and improve communication efficiency for SFL is largely unknown. 

	\subsection{Related Works}
	
	Extensive works focus on delay and energy minimization, i.e., communication efficiency for FL. The transmission of DNN parameters from devices to server requires a great deal of communication resources, and DNN model compression can reduce communication load \cite{sattler2019robust,jiang2022model}. The authors in \cite{huang2022accelerating} propose a topology-optimized scheme to improve the both communication and computation efficiency. In addition, by device scheduling, the communication efficiency and training performance can be efficiently improved \cite{zhang2022communication,ren2020scheduling,shi2020joint}. As devices are energy constrained, resource allocation for energy efficiency is studied for FL \cite{yang2020energy,mo2021energy,do2021deep}. In order to overcome the communication bottleneck of FL in a multi-access channel, the authors in \cite{overtheair1,overtheair2} use over-the-air computing approach for model aggregation. Data heterogeneity is other critical issue in FL and attracts extensive attention \cite{yang2022client}.
	
	There is a handful of work using model splitting for inference. A usual solution is to establish a regression model that predicts the latency and energy consumption for each layer according to actual test, and then determine the optimal split point for inference tasks with the prediction result \cite{kang2017neurosurgeon,li2019edge}. Under different channel conditions, optimal DNN splitting is determined for both delay minimization and throughput maximization in \cite{8737614}. The work \cite{9758628} considers online DNN splitting for  edge-device co-inference. The authors in \cite{lyu2020foreseen} consider privacy preserving in SFL and inference. The authors in \cite{add3}  study the optimal multi-split points on multiple computing nodes.
	
	In addition, a deal of works focus on SL. The authors in \cite{gupta2018distributed} study collaborative training among multiple devices via SL. The work \cite{ParallelSL} proposes a parallel SL algorithm to reduce the training latency. Using the clustering method and combining the parallel algorithm, the authors in \cite{add2} propose a communication efficient SL architecture, and optimize resource allocation. The work \cite{add1} considers the dynamic optimization problem of split points in the collaborative DNN training scenario.

	Based on above related work, it is found that most effort on communication efficiency is devoted to either FL or SL, and how to improve communication efficiency for SFL has few attention. This problem is non-trivial due to two-folds: First, compared to DNN splitting used for inference, in SFL the sub-models need to be aggregated at server, and thus the selection of split points significantly affects the convergence performance. Second, due to the heterogeneity of multiple devices, including local data, fading channels, and communication-computing capacity, the resources of devices should be jointly optimized together with the split points.
	\subsection{Contributions}	
	Based on the above motivation, we study SFL in this paper and the main contribution is summarized as follows.

	\begin{itemize}
		\item We consider a new problem of jointly optimizing split point selection and bandwidth allocation in SFL for system total latency minimization. The devices have individual split points that significantly affect both the sub-model aggregation at server and the allocated bandwidth.
		\item The formulated problem is non-convex. To solve the problem, we decompose the problem into two sub-problems, in which the split point optimization problem is solved by backward induction and the bandwidth allocation problem is solved by convex method, and we use alternating optimization to iterate the two sub-problems to obtain the an efficient solution.
	\end{itemize}

	\subsection{Organization}
	The rest of the paper is organized as follows. Section \ref{sec:system_model_and_learning_mechanism} introduces the system model and the problem formulation. In Section \ref{sec:model_splitting_and_bandwidth_allocation}, we solve the split point optimization and bandwidth allocation problem. Experimental results are showed in Section \ref{sec:experiment}. Finally we have a conclusion in Section \ref{sec:Conclusion}.
	
	\section {System Model and Problem formulation}
	\label{sec:system_model_and_learning_mechanism}
	In this section, we first introduce the SFL model and communication model, respectively, then formulate the optimization problem of joint split point selection and bandwidth allocation. 
	
	


	\subsection{Split Federated Learning Model}
	\begin{figure}[ht]
		\subfigure[Split federated learning.]{
			\begin{minipage}[t]{0.5\linewidth}
				\centering
				\includegraphics[width=1.9\linewidth]{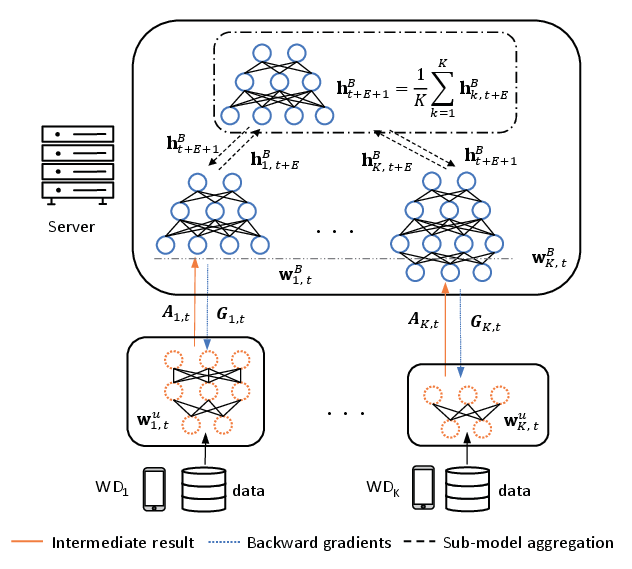}
				\vspace{-0.8cm}
				\quad\quad\quad
				\label{fig:system model}
			\end{minipage}
		\quad\quad\quad\quad\quad\quad\quad\quad\quad\quad\quad\quad\quad
		}%
	\\
		\subfigure[Illustration of model splitting with eight layers.]{
			\begin{minipage}[t]{0.5\linewidth}
				\centering
				\includegraphics[width=1.8\linewidth]{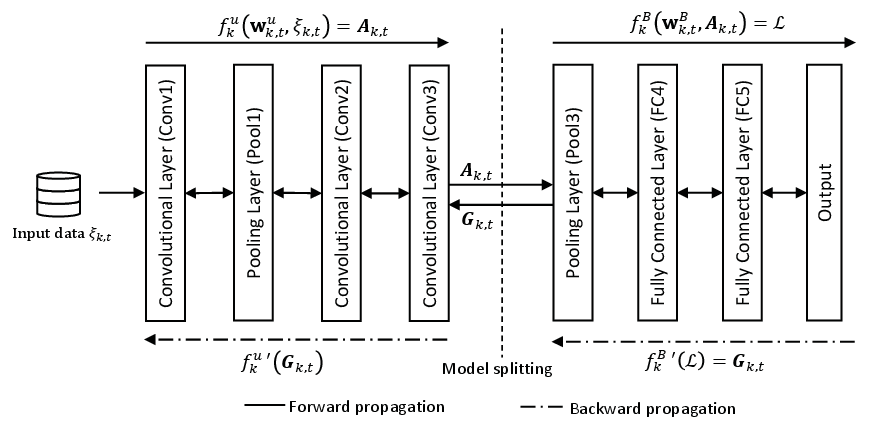}
				\vspace{-0.8cm}
				\quad\quad\quad\quad\quad
				\label{fig:model splitting}
			\end{minipage}%
		\quad\quad\quad\quad\quad\quad\quad\quad\quad\quad\quad
		}%
		\quad
		\centering
		\vspace{-0.4cm}
		\caption{Illustrated split federated learning and model splitting.}
		\label{fig:system}
		\vspace{-0cm}
	\end{figure}
	\label{sec:Federated learning with Model Splitting}
	As shown in Fig. \ref{fig:system model}, we consider a SFL system consisting of multiple devices and a server. The server connects with devices via wireless channels. Due to the limited computing capacity, the devices need to offload part of the training tasks to the server. In SFL, the devices and server collaboratively train a common DNN model but they only train a portion of the DNN model, as shown in Fig.  \ref{fig:system model}. Specifically, on each device the DNN is split to two parts, where the front-end part is deployed on the device and the other is deployed on the server. This implies that each device has an individual split point of the DNN, due to the fact that the devices have heterogeneous computing and transmission capacity. To tackle the issue of different split points of devices, for easy implementation, we assume that the server only aggregates the parts that are common to all back-end sub-models. Fig. \ref{fig:model splitting} illustrates an example of model splitting with eight layers. As the devices and server only have access to their own sub-networks, and thus it provides better privacy.
	
	
	
	Specifically, as shown in Fig. \ref{fig:system}, a device-server round of SFL consists of three steps as follows. 
	\begin{itemize}
		\item\textbf{Step 1:} Forward propagation. In iteration $t$, each device $k$ processes the input data with its own sub-network $\textbf{w}^{u}_{k,t}$, and sends the intermediate result $\bm{A}_{k,t}$ and label to the server through wireless channel. The server processes the intermediate result with the other sub-network $\textbf{w}^{B}_{k,t}$ to get the result and then calculates the loss $\mathcal{L}$ with the label.
		\item\textbf{Step 2:} Sub-model update. The server updates the parameter of its sub-network $\textbf{w}^{B}_{k,t}$ through backward propagation and sends the backward gradient $\bm{G}_{k,t}$ on the split layer to each device $k$. Then each device-side sub-network is updated in the same way. 
		\item\textbf{Step 3:} Sub-model aggregation on server. After $E$ iterations update, based on the highest split point, the common part of all sub-networks $\textbf{h}^B_{k,t+E}$ are aggregated on the server, and the updated $\textbf{h}^B_{t+E+1}$ will be used for next round training.
	\end{itemize}
Note that the mini-batch based stochastic gradient descent (SGD) method is adopted in this paper, so that Step 1 and Step 2 are repeated $E$ times before Step 3.

	Suppose that each device holds $M$ training data $x_{k,j}$, and the goal of training is to minimize the loss function, which is defined as:
	\begin{align}
		F_k(\textbf{w}_{k}^u,\textbf{w}_{k}^B)\triangleq\frac{1}{M}\sum^M_{j=1} f_k^B(\textbf{w}_{k}^B,f_k^u(\textbf{w}_{k}^u,x_{k,j})).
	\end{align}
	where $f_k^{u}$ and $f_k^{B}$ are loss functions for the front-end and back-end sub-models, respectively.

	In Step 1, the intermediate result $\bm{A}_{k,t}$ and the loss $\mathcal{L}$ in iteration $t$ are computed by
	\begin{align}
		\begin{aligned}
			\bm{A}_{k,t} =& f_k^{u}(\textbf{w}_{k,t}^{u},\xi_{k,t}),\\
			\mathcal{L} =& f_k^{B}(\textbf{w}_{k,t}^{B},\bm{A}_{k,t}),
		\end{aligned}
	\end{align}
 	where $\xi_{k,t}$ is a sample chosen from local data $x_{k,j}$. 
	

	In Step 2, the model parameters are updated by
	\begin{align}
		\begin{aligned}
			\left\{\begin{array}{ll}
				\textbf{w}_{k,t+1}^u&=\textbf{w}_{k,t}^u-\eta_t\frac{\partial \mathcal{L}}{\partial \textbf{w}_{k,t}^u}\\&=\textbf{w}_{k,t}^u-\eta_t\left[\nabla F_k(\textbf{w}_{k,t}^u,\textbf{w}_{k,t}^B,\xi_{k,t})\right]_u;\\
				\textbf{w}_{k,t+1}^B&=\textbf{w}_{k,t}^B-\eta_t\bm{G}_{k,t}\frac{\partial \bm{A}_{k,t}}{\partial \textbf{w}_{k,t+1}^B}\\&=\textbf{w}_{k,t}^B-\eta_t\left[\nabla F_k(\textbf{w}_{k,t}^u,\textbf{w}_{k,t}^B,\xi_{k,t})\right]_B,
			\end{array}
			\right.
		\end{aligned}
	\end{align}
	where $[\nabla F_k(\textbf{w}_{k,t}^u,\textbf{w}_{k,t}^B,\xi_{k,t})]_u$ and $[\nabla F_k(\textbf{w}_{k,t}^u,\textbf{w}_{k,t}^B,\xi_{k,t})]_B$ represent the gradients of model parameters deployed on device and server, respectively. 
	
	After completing a round, the common part of DNN model parameters deployed on server are aggregated in Step 3:
	\begin{align}
		\textbf{h}_{t+E+1}^B=\frac{1}{K}\sum_{k=1}^{K}\textbf{h}_{k,t+E}^B,
	\end{align}

From the processes described above, it is observed that the exchange between device-server in SFL is $\bm{A}_{k,t}$ and $\bm{G}_{k,t}$, whose sizes are significantly lower than in FL. For example, the sizes of $\bm{A}_{k,t}$ and $\bm{G}_{k,t}$ in AlexNet are about 280 KBytes, while the size of the exchanged model between device-server is over 200 MBytes.  Note that FL only needs to transmit once in each round while SFL needs multiple transmissions. However, the computation time saved by SFL compared to FL is significant, because most of the computation is done by the server, whereas in FL all the computation is undertook by the resource-limited devices. Therefore, even if SFL requires multiple local updates in a round, SFL can be more efficient than FL.
	
	\subsection{Convergence Analysis}
	
	
	Before analysis, we first clarify several symbols used in the analysis process to simplify the reasoning process and results. Because only the common part of back-end sub-models $\textbf{w}_{k,t}^B$ are aggregated, we use $\textbf{h}_{k,t}^u$ and $\textbf{h}_{k,t}^B$ to denote the parts of the model involved in aggregation and those not involved respectively, and the split point between them is $\ell = \max\ \{\ell_k\}.$
	
	We define the total model parameter for device $k$ as $\textbf{w}_{k,t}=\left[\textbf{h}_{k,t}^u,\textbf{h}_{k,t}^B\right]$. Assume the training task finishes after $T$ iterations and return ${\textbf{w}}_{k,T}$ as the output for device $k$, and $\overline{\textbf{w}}_{k,T}=[\textbf{h}^u_{k,T},\sum_{k=1}^K\frac{1}{K}\textbf{h}^B_{k,T}]$. Let $F^*$and $F_k^*$ be the minimum values of $F$ and $F_k$, respectively, and ${\bf w}^*$ the optimal parameters. 
	
	In order to simplify the analysis process, we make the following assumptions for loss function.
	\begin{assumption}
		\label{assumption1}
		We assume that the loss function of all devices obeys the following law:
		\begin{itemize}
			\item $F_k({\bf w})$ is Lipschitz smooth: $F_k({\bf v})\leq F_k({\bf w})+({\bf v}-{\bf w})^T\nabla{F_k({\bf w})+\frac{\beta}{2}\Vert {\bf v}-{\bf w}\Vert_2^2}$.
			\item $F_k({\bf w})$ is $\mu$-strongly convex: $F_k({\bf v})\geq F_k({\bf w})+({\bf v}-{\bf w})^T\nabla F_k({\bf w})+\frac{\mu}{2}\Vert {\bf v}-{\bf w}\Vert_2^2$.
			\item The variance of the gradients for each layer in device have upper bound: \\ $\mathbb{E}||\nabla F_k({\bf h}_{k,t}^u,{\bf h}_{k,t}^B,\xi_{k,t})-\nabla F_k({\bf h}_{k,t}^u,{\bf h}_{k,t}^B)||^2\leq L\sigma^2$, where $\xi_{k,t}$ is the data uniformly sampled from device $k$.
			\item The expected squared norm of each layer's gradients have upper bound: \\$\mathbb{E}||\nabla F_k({\bf h}_{k,t}^u,{\bf h}_{k,t}^B,\xi_{k,t})||^2\leq LZ^2$.
		\end{itemize}
	\end{assumption}
	
	Now, we derive the upper bound of the difference between the value of the loss function after $T$ iterations and the optimal value, and we have the following theorem:
	\begin{theorem}
		\label{theorem2}
		Let $\alpha = \frac{\beta}{\mu}, \gamma = \max\{8\alpha,E\}$ and choose the learning rate $\eta_t = \frac{2}{\mu(\gamma+t)}$. After $T$ iterations, there is an upper bound between the mean value and the optimal value of all device loss functions, which satisfies the following relationship:
		\begin{align}\label{eqn:convergence}
		\begin{aligned}
			&\mathbb{E}\left\{\sum_{k=1}^{K}\frac{1}{K}F(\overline{{\bf w}}_{k,T})\right\}-F^*\\
			&\leq \frac{\alpha}{\gamma+T}\left[\frac{2P}{\mu}+\frac{\mu}{2}(\gamma+1)\mathbb{E}\sum_{k=1}^K\frac{1}{K}\big{\Vert}\overline{{\bf w}}_{k,1}-{\bf w}^*\big{\Vert}^2\right],
		\end{aligned}
		\end{align}
		where $P=2(E-1)^2LZ^2+6\beta\Gamma+\ell Z^2+\frac{1}{K}(L-\ell)Z^2+\ell\sigma^2+\frac{1}{K}(L-\ell)\sigma^2$. And $\Gamma=\sum_{k=1}^K\frac{1}{K}\big(F^*-F_k^*\big)$ denotes the impact of data heterogeneity.
	\end{theorem}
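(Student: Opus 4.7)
The plan is to adapt the FedAvg-style convergence analysis (in the spirit of Li et al.\ ``On the convergence of FedAvg on non-IID data'') to the split setting. The novelty here is not the overall machinery of the proof, but rather the way Assumption~\ref{assumption1}'s variance bound $L\sigma^{2}$ and gradient-norm bound $LZ^{2}$ are redistributed across the device-side layers (contributing $\ell$) and the server-side layers (contributing $L-\ell$), combined with the fact that only the server-side block $\textbf{h}_{k,t}^{B}$ is averaged across the $K$ devices while $\textbf{h}_{k,t}^{u}$ is never aggregated.

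First, I would introduce two virtual sequences, $\bar{\textbf{w}}_{t}=\frac{1}{K}\sum_{k}\textbf{w}_{k,t}$ and a virtual aggregated gradient, and define the per-layer decomposition so that each squared norm splits as a sum over the $\ell$ non-aggregated front-end layers and the $L-\ell$ aggregated back-end layers. Then I would expand $\mathbb{E}\Vert \bar{\textbf{w}}_{t+1}-\textbf{w}^{*}\Vert^{2}$ one SGD step at a time, using $\mu$-strong convexity to extract a contraction factor $(1-\eta_{t}\mu)$ and $\beta$-smoothness to control the per-step progress via $F(\bar{\textbf{w}}_{t})-F^{*}$. The cross term will be handled by the standard decomposition $\mathbb{E}\langle \bar{\textbf{w}}_{t}-\textbf{w}^{*},\bar{\textbf{g}}_{t}\rangle\ge F(\bar{\textbf{w}}_{t})-F^{*}+\frac{\mu}{2}\Vert \bar{\textbf{w}}_{t}-\textbf{w}^{*}\Vert^{2}-\text{drift}$, where the drift term will absorb the $6\beta\Gamma$ contribution reflecting data heterogeneity through $\Gamma=\sum_{k}\frac{1}{K}(F^{*}-F_{k}^{*})$.

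The key step, and the main obstacle, is the variance bound on the stochastic gradient that shows up as $P$. For the $\ell$ front-end layers, each device updates independently, so mini-batch noise contributes the full $\ell\sigma^{2}$ and $\ell Z^{2}$. For the $L-\ell$ back-end layers, the server aggregation averages $K$ independent noisy gradients, so by independence of the samples $\xi_{k,t}$ across devices the variance is divided by $K$, yielding the $\frac{1}{K}(L-\ell)\sigma^{2}$ and $\frac{1}{K}(L-\ell)Z^{2}$ terms. One must be careful to write these bounds layer-wise before summing, otherwise the factor $\frac{1}{K}$ cannot be sharply extracted. I would then bound the ``local divergence'' $\mathbb{E}\Vert \textbf{w}_{k,t}-\bar{\textbf{w}}_{t}\Vert^{2}$ accumulated over at most $E$ local iterations between synchronizations; this is where the $2(E-1)^{2}LZ^{2}$ term enters, through a standard telescoping argument using the bounded squared gradient assumption and the fact that synchronization occurs at least every $E$ iterations.

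Combining the bounds yields a one-step recursion of the form
\begin{align*}
\mathbb{E}\Vert\bar{\textbf{w}}_{t+1}-\textbf{w}^{*}\Vert^{2}\le (1-\eta_{t}\mu)\,\mathbb{E}\Vert\bar{\textbf{w}}_{t}-\textbf{w}^{*}\Vert^{2}+\eta_{t}^{2}P.
\end{align*}
Plugging in the diminishing learning rate $\eta_{t}=\frac{2}{\mu(\gamma+t)}$ with $\gamma=\max\{8\alpha,E\}$ and proving by induction that $\mathbb{E}\Vert\bar{\textbf{w}}_{t}-\textbf{w}^{*}\Vert^{2}\le \frac{v}{\gamma+t}$ for an appropriate constant $v$ gives the $O(1/T)$ decay. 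Finally, invoking $\beta$-smoothness to convert the parameter error into a function-value gap produces the factor $\alpha=\beta/\mu$ in front and yields the claimed bound~\eqref{eqn:convergence}. The induction step and the algebraic choice of $\gamma$ are routine; the substantive work is entirely in the layer-wise split of the variance and gradient-norm terms.
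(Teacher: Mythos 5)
Your proposal is correct and follows essentially the same route as the paper: the Li-et-al.-style FedAvg analysis with per-step expansion of the squared distance to $\mathbf{w}^*$, the layer-wise split of the variance and gradient-norm bounds into $\ell$ non-aggregated and $(L-\ell)/K$ aggregated contributions, the $(E-1)^2\eta_t^2 LZ^2$ local-divergence bound, the recursion $\Delta_{t+1}\le(1-\mu\eta_t)\Delta_t+\eta_t^2P$, and the induction plus $\beta$-smoothness finish. The only small imprecision is your virtual sequence $\bar{\mathbf{w}}_t=\frac{1}{K}\sum_k\mathbf{w}_{k,t}$, which averages the front-end as well; the paper instead tracks $K$ per-device sequences $\overline{\mathbf{w}}_{k,t}=[\mathbf{h}^u_{k,t},\frac{1}{K}\sum_k\mathbf{h}^B_{k,t}]$ and averages the squared distances, matching the quantity actually bounded in the theorem — but since you explicitly handle the partial aggregation layer-wise, this is a notational fix rather than a gap.
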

	\begin{proof}
		See Appendix\ref{appendix2}.
	\end{proof}
	From \eqref{eqn:convergence}, we have 
	$\frac{\partial P}{\partial \ell} = (1-\frac{1}{K})(Z^2+\sigma^2)\geq0.$
	Therefore, higher split point $\ell_k$ leads to a lager upper bound of $\mathbb{E}\left\{\sum_{k=1}^{K}\frac{1}{K}F(\overline{{\bf w}}_{k,T})\right\}-F^*$, which means that learning performance can be compromised. This is due to that less DNN parameters participate in the aggregation process and thus are difficult to get more information for each device. In this way, the trained model may be lack of generalization ability.

	\subsection{Communication Model }
	Since the amount of computing and output size of each layer in DNN varies widely, the split point has a great impact on the execution efficiency of the SFL, including both computing and communication latency. In this paper, we aim at finding the best split points and allocating the optimal bandwidth for devices to achieve the minimum latency. In each iteration $t$, the total latency consists of two parts: computing latency and communication latency. The computing latency includes the latency generated by parameter updates on both devices and server. Moreover, the communication latency also includes latency generated by uplink transmission of $\bm{A}_{k,t}$ and downlink transmission of $\bm{G}_{k,t}$. Because the server has strong computing and communication capability, the latency generated by server can be ignored. As a result, we only consider the latency caused by devices since they are capability-limited in general. In following, we focus on latency in one iteration, thus the indices of iteration $t$ are omitted for brevity. 
	
	In this paper, we assume that the latency of computing the front $\ell_k$ layers of the DNN model on device $k$ follows the shifted exponential distribution\cite{shi2020joint,lee2017speeding}:
	\begin{align}\label{eqn:1}
		\mathbb{P}[\tau^{cp}_{\ell_{k}}<\theta] =\left\{
		\begin{aligned}
			&1-e^{-\frac{\epsilon_k}{c_{\ell_{k}}}(\theta-a_kc_{\ell_{k}})},&\  {\rm if\ }\theta\ge a_kc_{\ell_{k}}; \\
			&0,&\ {\rm otherwise.}\ \ \ 		
		\end{aligned}	
		\right. 	
	\end{align}
	$a_k$ and $\epsilon_k$ are fixed parameters indicating the maximum and fluctuation of the computation capability for device $k$, $c_{\ell_{k}}$ is the amount of computing of the front $\ell_k$ layers of the DNN model on device $k$, which is determined by the split point $\ell_k$. The product of $a_k$ and $c_{\ell_{k}}$ is the lower bound of the computing latency of device $k$, which corresponds to the computing capacity and load of device $k$. However, due to the complexity of the condition, the computing capacity of device is always difficult to reach the maximum, but will fluctuate within a certain range. The amount of computing load $c_{\ell_{k}}$ can be represented as the number of Multiply-Accumulate Operations (MACs), which is given by \cite{liu2018demand,xu2019reform}:
	\begin{align}\label{eqn:computation-layer}
		c_{\ell_{k}} =  \sum_{i = 1}^{\ell_k}\sum_{j=1}^{n_i}r^j_is^j_in_{i-1}h^j_iw^j_in_im\text{,}
	\end{align}
	where $r_i$ and $s_i$ are the $j$-th filter's kernel sizes of layer $i$, $n_i$ denotes the number of kernels in layer $i$, $h_i$ and $w_i$ are the corresponding height and width of output feature map respectively, and $m$ is the batch size. Accordingly, the data size of the intermediate result for each layer is
	\begin{align}\label{eqn:datasize}
		D_{\ell_k} = h_{\ell_k}w_{\ell_k} n_{\ell_k} m\text{.}
	\end{align}
	
	For communication aspect, the data transmission rate of device $k$ is expressed as
	\begin{align}\label{eqn:shannon}
		R_k = b_kW\log_2\left(1+\frac{p_k|g_k|^2}{N_0}\right)\text{,}
	\end{align}
	where $W$ is the total bandwidth, $b_k$ denotes bandwidth allocation ratio for device $k$, $p_k$ denotes the transmission power of device $k$, $g_k$ is the corresponding channel coefficient, and $N_0$ denotes the power of additive white Gaussian noise. 
	
	Therefore, according to (\ref{eqn:datasize}) and (\ref{eqn:shannon}), the transmission time for device $k$ can be denoted by:
	\begin{align}
		\tau^{cm}_{\ell_{k}} = \frac{D_{\ell_k}}{b_kW\log_2(1+\frac{p_k|g_k|^2}{N_0})}\text{.}
	\end{align}
	Due to the parallel training and transmission among the devices, the total latency of one forward and backward propagation of the system depends on the device with the largest latency, and the total latency is expressed by
	\begin{align}\label{eqn:total_t}
		\tau^{total} = \max_{k \in \mathcal{K}}\{\tau^{cp}_{\ell_{k}} + \tau^{cm}_{\ell_{k}}\}\text{,}
	\end{align}
	where $\mathcal{K} = \{1,2,...,K\}$ is the set of the devices.
	 
	In this paper, we adjust the split point of DNN and bandwidth allocation ratio to each device as to minimize the total latency in each device-edge iteration. The joint optimization problem of model splitting and bandwidth allocation can be formulated as follows:
	\begin{align}
			(\text{P1})\ \ \ \ \min_{\{\ell_k\},\{b_k\}} \quad &\mathbb{E}(\tau^{total}) \label{P1}\\
			{\rm\quad s.t.}\quad\ &{\rm Constraints\ \eqref{eqn:1}\ to\ \eqref{eqn:total_t},}\notag \\ 
			& \ell_k\in \{1,2,\cdots,L\},~\forall k, \tag{\ref{P1}{a}}\\ 
			& \ell_{k}(1-\frac{1}{K})\Phi^2\leq \hat{\Phi}^2,~\forall k, \tag{\ref{P1}{b}} \label{P1:c}\\ 
			&\sum_{k=1}^{K}b_k \leq 1,\  b_k\in [0,1],~\forall k. \tag{\ref{P1}{c}}
	\end{align}
	Here $\Phi^2=Z^2+\sigma^2$ which are constants defined in Assumption \ref{assumption1}, and $\hat{\Phi}^2$ denotes the limit on $\Phi^2$. Thus \eqref{P1:c} indicates the restriction on split point based on the convergence analysis. In another word, this is to guarantee a relatively small value of the split point so that a large part of the DNN model can be aggregated at the server for ensuring the accuracy performance.

	Note that solving problem P1 is challenging due to the integer nature of the split points. Therefore, in next section we decompose P1 into two sub-problems, solving the optimization of split points and bandwidth ratios respectively. However, searching the split point is tightly coupled with the bandwidth allocation for the goal of latency minimization, because both of these variables affect the transmission latency. Therefore, we use alternating optimization to solve the two variables.

	\section{Proposed Solution}
	\label{sec:model_splitting_and_bandwidth_allocation}
	In this section, we solve the problem P1 considered in the previous section. We decompose the original problem into two sub-problems with one for split point optimization and the other for bandwidth allocation, and we solve the two sub-problems by alternating optimization.
	\subsection{Optimal Model Splitting}\label{subsec:optimal_model_splitting}
	In this sub-section, we find the best split point of DNN for each device. For given the bandwidth ratios $\{b_k\}$, and let $\hat{L} = \left\lfloor\frac{\hat{\Phi}^2}{(1-\frac{1}{K})\Phi^2}\right\rfloor$, here we have the following optimization problem:
	\begin{align}
		(\text{P2})\ \ \ \ \min_{\{\ell_k\}}& \quad \mathbb{E}(\tau^{total}) \label{P2}\\
		{\rm s.t.}& \quad {\rm Constraints\ \eqref{eqn:1},\eqref{eqn:computation-layer},\eqref{eqn:total_t},}\notag\\
		&\quad \ell_{k}\leq \hat{L},~\forall k, \tag{\ref{P2}{a}} \label{P2:a}\\
		&\quad \ell_k\in \{1,2,...,L\},~\forall k \tag{\ref{P2}{b}}.
	\end{align}
	Firstly, it is readily observed that P2 can be decoupled to $K$ sub-problems, where each sub-problem corresponds to one device and can be solved independently. Therefore, for each independent sub-problem of solving $\ell_k$, we use the method of backward induction to decide the optimal splitting strategy for each layer. Because the choice of the split point is limited by the $\hat{L}$, it starts from the deepest layer of DNN available, i.e., layer $\hat{L}$, and calculates the expected latency $\mathbb{E}(V_{\hat{L}_k})$. Then the $V_{(\hat{L}-1)_k}$ and computing latency threshold $\hat{\tau}^{cp}_{(\hat{L}-1)_k}$ at layer $\hat{L}-1$ are calculated according to the result of layer $\hat{L}$. The method continues to calculate forward in turn until the first layer of DNN. In this way, we can calculate the threshold $\hat{\tau}^{cp}_{\ell_{k}}$ and expected latency at each layer. The splitting strategy is that the devices execute the calculation of each layer in sequence and records $\tau^{cp}_{\ell_{k}}$. If $\tau^{cp}_{\ell_{k}}<\hat{\tau}^{cp}_{\ell_{k}}$, it will be split at this layer, the local calculation of device will end and the intermediate result will be sent to server. Otherwise, device continues the calculation of the next layer of DNN and repeats the same step until layer $\hat{L}$. 
	
	We use $V_{\ell_{k}}$ to represent the minimum expected total latency for device $k$ of splitting at layer $\ell_k$. According to the actual computing latency $\tau^{cp}_{\ell_{k}}$, in layer $\ell_k$ we have:
	\begin{align}
		\label{eqn:bacakward_induction}
		\begin{split}
			V_{\ell_{k}} &= \min\{\tau^{cp}_{\ell_{k}}+\tau^{cm}_{\ell_{k}},\ \mathbb{E}(V_{(\ell+1)_{k}})\}\\
			&=	\left\{\begin{array}{l@{\quad}l}
				\tau^{cp}_{\ell_{k}}+\tau^{cm}_{\ell_{k}},&\ {\rm if}\ \tau^{cp}_{\ell_{k}}<\hat{\tau}^{cp}_{\ell_{k}},\\
				\mathbb{E}(V_{(\ell+1)_{k}})  ,&\ \rm{otherwise,}
			\end{array}
			\right. 
		\end{split}
	\end{align}
	where 
	\begin{align}\label{eqn:backward threshold}
		\hat{\tau}^{cp}_{\ell_{k}} = \mathbb{E}(V_{(\ell+1)_{k}})-\tau^{cm}_{\ell_{k}},
	\end{align}
	which indicates that the threshold determines whether to split in layer $\ell_k$. In layer $\hat{L}$, the expectation of latency can be calculated by:
	\begin{align}
		\begin{aligned}
			\mathbb{E}(V_{\hat{L}_k})&=\int_{a_kc_{\hat{L}_k}}^{+\infty}\frac{\epsilon_k}{c_{\hat{L}_k}}\theta \exp\big\{-\frac{\epsilon_k}{c_{\hat{L}_k}}(\theta-a_kc_{\hat{L}_k})\big\}\mathrm{d}\theta + \tau^{cm}_{\hat{L}_k}\\
			&=\left(a_k+\frac{1}{\epsilon_k}\right)c_{\hat{L}_k}+\tau^{cm}_{\hat{L}_k}.
		\end{aligned}
	\end{align}
	
	Then we continue backward induction, in layer $\ell_k$: 
	\begin{align}
		\begin{aligned}
			\mathbb{E}(V_{\ell_{k}}) &= \mathbb{E}\left(\min\{\tau^{cp}_{\ell_{k}}+\tau^{cm}_{\ell_{k}},\ \mathbb{E}(V_{(\ell+1)_{k}})\}\right)\\
			&= \int_{a_kc_{\ell_{k}}}^{\hat{\tau}^{cp}_{\ell_{k}}}(\theta+\tau^{cm}_{\ell_{k}})\frac{\epsilon_k}{c_{\ell_{k}}}\exp\big\{-\frac{\epsilon_k}{c_{\ell_{k}}}(\theta-a_kc_{\ell_{k}})\big\}\mathrm{d}\theta \\&\quad+ \int_{\hat{\tau}^{cp}_{\ell_{k}}}^{+\infty}\mathbb{E}(V_{(\ell+1)_{k}})\exp\big\{-\frac{\epsilon_k}{c_{\ell_{k}}}(\theta-a_kc_{\ell_{k}})\big\}\mathrm{d}\theta\\
			&= a_kc_{\ell_{k}}+\frac{c_{\ell_{k}}}{\epsilon_k}-(\hat{\tau}^{cp}_{\ell_{k}}+\frac{c_{\ell_{k}}}{\epsilon_k})\exp\big\{{-\frac{\epsilon_k}{c_{\ell_{k}}}(\hat{\tau}^{cp}_{\ell_{k}}-a_kc_{\ell_{k}})}\big\}\\&\quad+\Big(1-\exp\big\{{-\frac{\epsilon_k}{c_{\ell_k}}(\hat{\tau}^{cp}_{\ell_{k}}-a_kc_{\ell_{k}})}\big\}\Big)\tau_{\ell_{k}}^{cm} \\ 
			&\quad+\mathbb{E}(V_{(\ell+1)_{k}})\exp\big\{{-\frac{\epsilon_k}{c_{\ell_{k}}}(\hat{\tau}^{cp}_{\ell_{k}}-a_kc_{\ell_{k}})}\big\}.
		\end{aligned}
	\end{align}
	 Because the training process is not suitable for the dynamic change of split points, we fix at an appropriate split point for each device during the whole training process. The probability of model splitting in each layer for device $k$ is
	\begin{align}
		\label{eqn:split_prob}
		\begin{split}
			\mathbb{P}(\ell_k) = \left\{ \begin{array}{l@{\ }l}
				1-\exp\big\{{-\frac{\epsilon_k}{c_{1_k}}(\hat{\tau}^{cp}_{1_k}-a_kc_{1_k})}\big\},&\ell_k=1;\\
				\Big(\prod_{j_k=1}^{\ell_k-1}\exp\big\{{-\frac{\epsilon_k}{c_{j_k}}(\hat{\tau}^{cp}_{j_k}-a_kc_{j_k})}\big\}\Big)\\
				\times\Big(1-\exp\big\{{-\frac{\epsilon_k}{c_{\ell_{k}}}(\hat{\tau}^{cp}_{\ell_{k}}-a_kc_{\ell_{k}})}\big\}\Big),&1<\ell_k<\hat{L};\\
				\prod_{j_k=1}^{\hat{L}-1}\exp\big\{{-\frac{\epsilon_k}{c_{j_k}}(\hat{\tau}^{cp}_{j_k}-a_kc_{j_k})}\big\},&\ell_k=\hat{L}.
			\end{array}\right. 
		\end{split}
	\end{align}
	In this way, we select the split point at the layer with the maximum $\mathbb{P}(\ell_k)$:
	\begin{align}
		\label{eqn:opt_splayer}
		\ell^*_{k} =  \ \arg\ \max_{\ell_k}\{ \mathbb{P}(\ell_k)\}.
	\end{align} 
	
	Using \eqref{eqn:split_prob}, the expectation of the computing latency can be obtained by
	\begin{align}
		\begin{aligned}	
		\label{eqn:expect_splitpoint}
		\mathbb{E}(\tau^{cp}_{\ell_{k}}|\ell_k=\ell^*_k) =& \int_{a_kc_{\ell_{k}^*}}^{\infty}\theta\frac{\epsilon_k}{c_{\ell_{k}^*}}\exp\big\{-\frac{\epsilon_k}{c_{\ell_{k}^*}}(\theta-a_kc_{\ell_{k}^*})\big\}\mathrm{d}\theta\\
		=& c_{\ell_{k}^*}\Big(a_k+\frac{1}{\epsilon_k}\Big).
	\end{aligned}
	\end{align}
	
	
	\subsection{Bandwidth Allocation}
	The computing and communication conditions of devices are different, so the latency of all devices can vary widely. Here we allocate bandwidth for each device to minimize the expected latency of the whole system for given optimal split points ${\ell_k^*}$. The sub-problem is:
	
	\begin{align}
		\begin{aligned}
			\text{(P3)}\quad\min_{\{b_k\}}&\quad \mathbb{E}(\tau^{total})\\
			{\rm s.t.}&\quad{\rm Constraints\ \eqref{eqn:datasize}\ to\ \eqref{eqn:total_t},}\\
			&\quad \sum_{k=1}^{K}b_k \leq 1,\  b_k\in [0,1],~\forall k.
		\end{aligned}
	\end{align}
	
	Note that the optimal solution of problem P3 can be established if and only if the total delay of all devices is equal, because the total latency of the system is limited by the worst device due to the parallel computing and communication among devices, see \eqref{eqn:total_t}. This can be readily proved by contradiction: if the latency of a device is longer than that of another device, part of the bandwidth of the device with shorter latency can be allocated to the former, so as to reduce the latency. Therefore, when the latency of all devices is equal, it is optimal. Assume that the optimal latency is $\tau_k^*$, and we have the following theorem:
	\begin{theorem}
		\label{theorem1}
		The solution of P3 is given as:
		\begin{align}
			\label{eqn:bw_ration}		
			b_k^* = \frac{D_{\ell^*_k}}{[\tau_k^*-\tau_{\ell_{k}^*}^{cp}]W\log_2(1+\frac{p_k|g_k|^2}{N_0})},
		\end{align}
		where $D_{\ell^*_k}$ and $\tau_{\ell_{k}^*}^{cp}$ denote the intermediate data size corresponding to the split point and expected computing latency obtained by P2, respectively.
	\end{theorem}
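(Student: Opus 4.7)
My plan is to leverage the equal-latency observation that the authors already sketch immediately before the theorem, and turn it into a clean derivation of the closed-form expression for $b_k^*$. The approach has three steps: (i) justify rigorously that at the optimum the bandwidth constraint is tight and that all per-device latencies coincide, (ii) invert the closed-form expression for $\tau^{cm}_{\ell_k^*}$ in the total-latency equation to obtain $b_k^*$, and (iii) pin down the common latency $\tau_k^*$ from the bandwidth budget constraint.

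First I would formalize the contradiction argument. Suppose an optimal allocation $\{b_k\}$ yields device latencies $\tau_1,\dots,\tau_K$ that are not all equal, and let $\tau^{total}=\max_k \tau_k$. If some $\tau_j < \tau^{total}$, then since $\tau^{cm}_{\ell_k^*}$ is strictly decreasing and continuous in $b_k$, one can transfer a small amount $\delta>0$ of bandwidth from $j$ to any device achieving the maximum, which strictly decreases that device's latency while keeping $\tau_j\le\tau^{total}$, contradicting optimality. Similarly, if $\sum_k b_k < 1$, I can inflate every $b_k$ proportionally to strictly reduce the maximum latency. Hence at optimum $\sum_k b_k^*=1$ and $\tau_1=\dots=\tau_K=\tau_k^*$.

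Next I would use the equal-latency identity. For each $k$, conditional on the optimal split point $\ell_k^*$, the total latency satisfies
\begin{equation*}
\tau^{cp}_{\ell_k^*}+\frac{D_{\ell_k^*}}{b_k^*\,W\log_2(1+p_k|g_k|^2/N_0)}=\tau_k^*.
\end{equation*}
Solving algebraically for $b_k^*$ gives exactly the expression claimed in the theorem. Because $\tau_k^*>\tau^{cp}_{\ell_k^*}$ at any feasible point with finite transmission rate, the denominator is positive and the formula is well defined.

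Finally, I would determine $\tau_k^*$ (which the theorem leaves implicit) by substituting the expression for $b_k^*$ into the budget equality $\sum_{k=1}^K b_k^*=1$, yielding a single scalar equation in $\tau_k^*$ whose solution exists and is unique because the left-hand side is strictly decreasing in $\tau_k^*$ on the admissible interval $(\max_k \tau^{cp}_{\ell_k^*},\infty)$, tending to $+\infty$ at the left endpoint and to $0$ at infinity. The main mild obstacle is making the continuity/monotonicity argument for the contradiction step fully rigorous in the presence of the $\max$ in \eqref{eqn:total_t}; once that is handled, the remaining derivation is a direct inversion.
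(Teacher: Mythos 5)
Your proposal is correct and follows essentially the same route as the paper: argue by an exchange/contradiction argument that at the optimum all bandwidth is used and all device latencies are equal, then solve the resulting system for $b_k^*$. Your added monotonicity argument for the existence and uniqueness of $\tau_k^*$ is a welcome tightening of what the paper handles implicitly via the binary search in Algorithm~1, but it does not change the underlying approach.
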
 
	\begin{proof}
		See Appendix B. 
	\end{proof}
	In this way, solving P3 only needs to find $\tau_k^*$ in \eqref{eqn:bw_ration}, but it is difficult to give a close-form of it, so we use the binary search method to find the optimal value of $\tau_k^*$ as shown in Alg. 1. 
	
	
	\begin{algorithm}
		\renewcommand{\algorithmicrequire}{\textbf{Input:}}
		\renewcommand{\algorithmicensure}{\textbf{Output:}}
		\caption{Binary Search for Bandwidth Allocation}
		\label{alg:1}
		\begin{algorithmic}[1]
			\REQUIRE Optimal computing latency for each device $\tau_{\ell_{k}^*}^{cp}$, transmission data size $D_{\ell_k^*}$, total bandwidth $W$.
			\ENSURE Bandwidth allocation ratio $\{b_k\}$, optimal latency $\tau_k^*$.
			\STATE $\textbf{initialize:}$ $\tau_{low}=\max_{k \in \mathcal{K}}\{\tau_{\ell_{k}^*}^{cp}\}$, a big enough $\tau_{up}$, allowable error $\varepsilon$, $\tau=\tau_{up}$.
			\WHILE{$\sum_{k=1}^K b_k<(1-\varepsilon)$ OR $\sum_{k=1}^K b_k>1$}
			\STATE Substituting $\tau_k^*$ by $\tau$, calculate $b_k$ for all devices $k\in \mathcal{K}$ with \eqref{eqn:bw_ration};
			\STATE Compute the total bandwidth allocation ration $\sum_{k=1}^K b_k$;
			\IF{$0<\sum_{k=1}^K b_k<1-\varepsilon$}
			\STATE Adjust the searching region: $\tau_{up} = \tau,\ \tau=\frac{\tau+\tau_{low}}{2}$;
			\ELSIF{$\sum_{k=1}^K b_k>1$}
			\STATE Adjust the searching region: $\tau_{low} = \tau,\ \tau=\frac{\tau+\tau_{up}}{2}$;
			\ELSIF{$1-\varepsilon<\sum_{k=1}^K b_k<1$}
			\STATE Obtain the solution with $\varepsilon$, break loop.
			\ENDIF
			\ENDWHILE
			\STATE \textbf{return} $\{b_k\} $ and $\tau_k^*$.
		\end{algorithmic}
	\end{algorithm}
	\subsection{Alternating Optimization and Complexity}
	In the previous two sub-sections, we solve one of the two sub-problems P2 and P3 by assuming the other to be fixed. Note that the change of the split points will result in a different amount of data to be sent, which in turn affects the required bandwidth. Similarly, the allocated bandwidth will also affect the optimal split points. Therefore, P2 and P3 are solved by alternating optimization as shown in Alg. \ref{alg:2}, in which the output of P2 (P3) is the input of P3 (P2), and the process is repeated until  converge or maximum number of alternation $N_{Iter}$ is reached.
	
	In the solution of P2, the threshold $\hat{\tau}^{cp}_{\ell_k}$ of device $k$ need to be calculated for each layer, therefore, the computation complexity of P2 is $\mathcal{O}(K\hat{L})$. In the solution of P3, because the allocation ratios of all devices needs to be calculated in each loop, the computation complexity of P3 is $\mathcal{O}(K\log_2(\frac{\tau_{up}}{\varepsilon}))$. Finally, the alternating optimization in Alg. 2 is executed for $N_{Iter}$ times, thus the total computation complexity of the whole algorithm is $\mathcal{O}(N_{Iter}K\hat{L})$.
	
	
	\begin{algorithm}
		\caption{Alternating Optimization for P1}
		\label{alg:2}
		\begin{algorithmic}[1]
			\STATE \textbf{Initialize:} $\{b_k\}$ and $N_{Iter}$. 
			\FOR{$i  = 1 : N_{Iter}$}
			\STATE Find optimal split point $\ell_k^*$ using \eqref{eqn:opt_splayer} for given $\{b_k\}$;
			\STATE Find optimal ratio $\{b_k^*\}$ using \eqref{eqn:bw_ration} for given $\{\ell_k\}$;
			\ENDFOR
		\end{algorithmic}
	\end{algorithm}

	\section{Experimental Results}
	\label{sec:experiment}
	In this section, we evaluate the performance of the proposed algorithm for SFL.
	\subsection{Experiment Settings}
	Unless otherwise stated, we set up 20 devices to participate in the SFL. In order to evaluate the performance conveniently, we assume that each device equips a CPU with maximum frequency in range of $[1,5]$ GHz, and can process one MAC operation in each CPU cycle. Therefore, we set $a_k\in [0.2\times10^{-9},1\times10^{-9}]$ s/MAC randomly and set $\epsilon_k=\frac{2}{a_k}$ for the computing latency model. The total bandwidth for the system is $W=20$ MHz, and the transmit power of device is set to be $p_k=10$ dBm. We set the power spectrum density of the additive Gaussian noise to be $N_0=-114$ dBm, and the channel power gain is modeled as $|g_k|^2=\rho|\tilde{g}|^2$, where $|\tilde{g}|^2$ denotes small-scale fading components that follows normalized exponential distribution, and $\rho$ is the large-scale fading components given by $128.1+37.6\log_{10}(10^{-3}d_k)$ dB, in which $d_k$ is the distance between server and device $k$. We use AlexNet\cite{NIPS2012_c399862d} as the DNN model in the ML experiment, which contains 5 convolutional layers and 3 fully connected layers. The AlexNet structure  used in this paper is shown as Fig. \ref{fig:AlexNet}.
	\begin{figure}
		\begin{centering}
			\includegraphics[width=1 \linewidth]{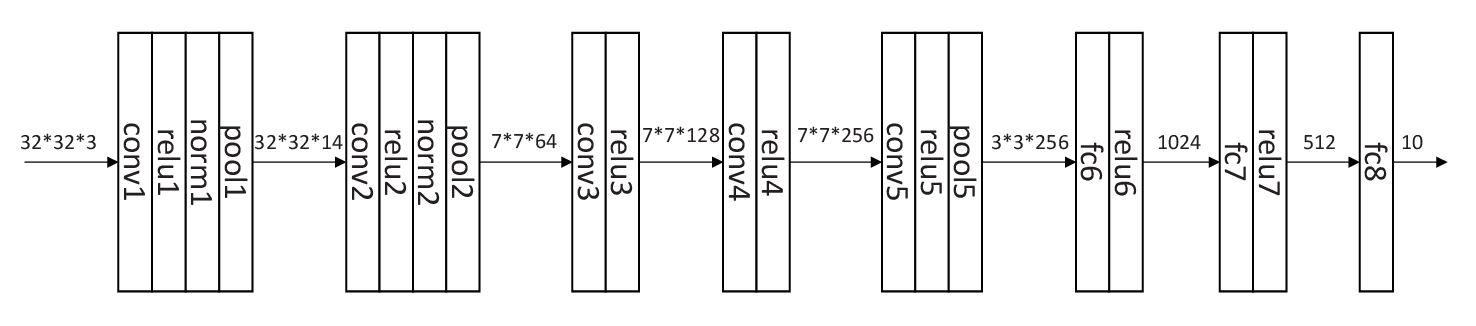}
			\vspace{-0.4cm}
			\caption{The structure of AlexNet.}\label{fig:AlexNet}
		\end{centering}
	\vspace{-1.2cm}
	\end{figure}
	\subsection{Computation Amount of DNN Model}
	\begin{figure}[ht]
		\centering
		\includegraphics[width=1 \linewidth, trim=8cm 0.1cm 6cm 0.6cm, clip]{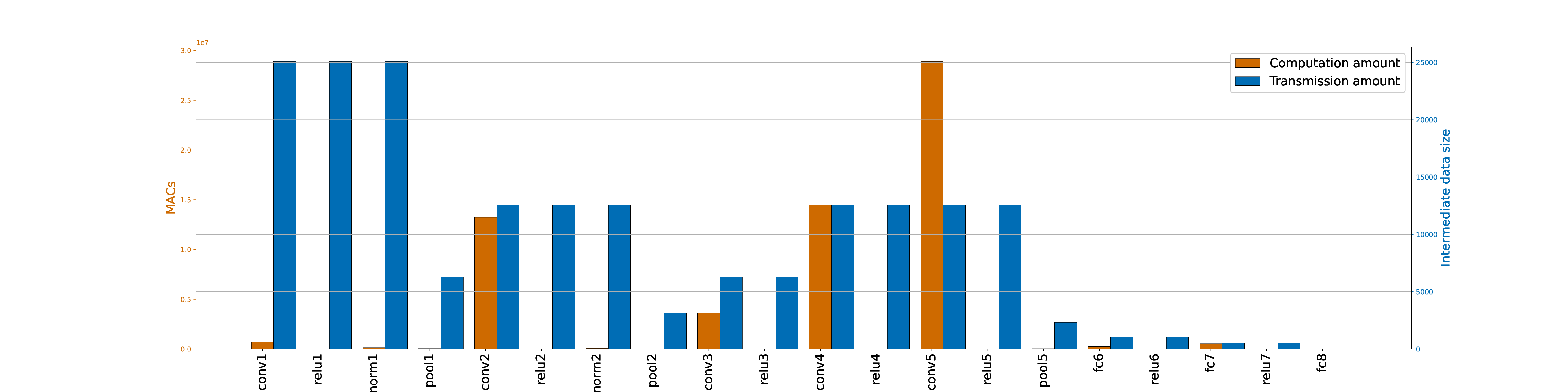}
		\vspace{-0.4cm}
		\caption{ Layer amount of calculation and transmitting data for AlexNet.}\label{fig:layer_data}
		\vspace{-0cm}
	\end{figure}
	We evaluate the latency performance of our proposed system with different DNN models by experiment. In our experiment, AlexNet is considered as structures with 20 layers. We first calculate the amount of MACs and transmitting data of each layer for AlexNet, as showed in Fig. \ref{fig:layer_data}. We can observe that the Convolutional layers have the most amount of computation, which occupies more than 90\% of the total amount. In contrast, the computation amount of Activation layers (relu) and Normalization layers (norm) is much lower. On the other hand, it is noted that the amount of data transmitted shows a downward trend with the increase of layer number.

	\subsection{Total Latency Performance}
	
	We study the influence of split points on different DNN models. 	
	Fig. \ref{fig:ALexNettt} and Fig. \ref{fig:VGG1666} show the optimal split point and expected latency of AlexNet and VGG16, respectively. $\hat{L}$ is set to be 8 for AlexNet and 6 for VGG16. We can observe that as the average device-server distance becomes small, the split point being closer to the input side. This is because  that short distance means high transmission rate and thus the devices prefer transmitting more data to server. In addition, the splitting point is closer to the output side when the computing capacity of devices (i.e., $a_k$) increases, because the higher layers have smaller amount of transmission data as Fig. \ref{fig:layer_data} shows, which benefits reducing the transmitting latency. In Fig. \ref{fig:SNRvslatency}, we illustrate the total average latency $\tau^{total}$ v.s. average device-server distance. Obviously, lower $a_k$ and shorter distance, i.e., better computing and communication conditions, have benefits in reducing total latency. 

	\begin{figure}[ht]
	\centering
	\subfigure[Optimal splitting points.]{
		\centering
		\includegraphics[width=0.44 \linewidth, trim=0.5cm 0.1cm 1.1cm 0.7cm, clip]{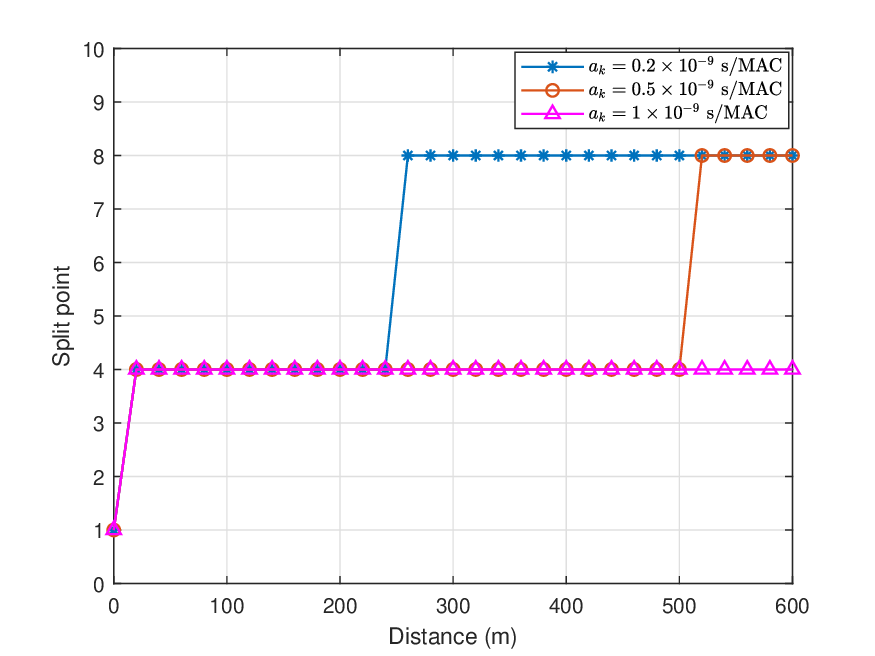}
		\vspace{-0.8cm}
		\label{fig:SNRvsEl}
	}%
    \ 
	\subfigure[Expected latency.]{
		\centering
		\includegraphics[width=0.46 \linewidth, trim=0.4cm 0.1cm 0.9cm 0.7cm, clip]{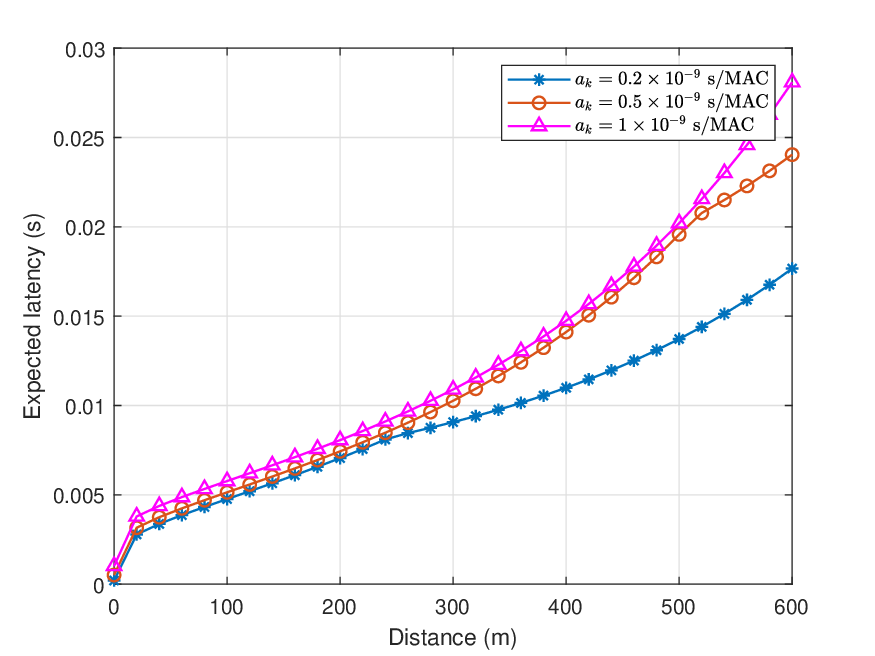}
		\vspace{-0.8cm}
		\label{fig:SNRvslatency}
	}%
	\centering
	\vspace{-0.2cm}
	\caption{Split point and expected latency v.s. distance with AlexNet.}
	\label{fig:ALexNettt}
	\vspace{-0.2cm}
\end{figure}
	\begin{figure}[ht]
		\centering
		\subfigure[Optimal splitting points.]{
			\centering
			\includegraphics[width=0.44 \linewidth, trim=0.5cm 0.1cm 1.1cm 0.7cm, clip]{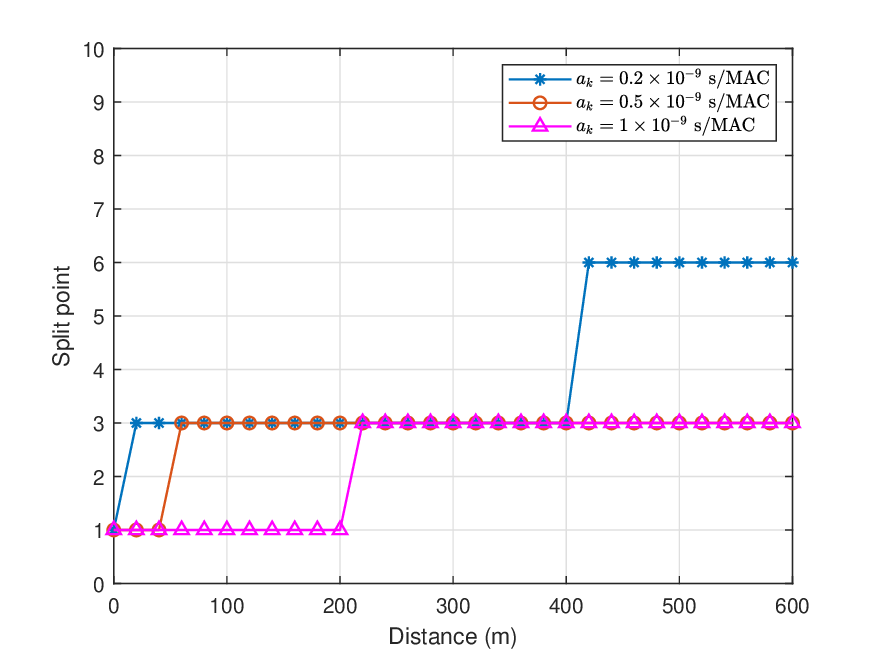}
			\vspace{-0.8cm}
			\label{fig:SNRvsEl(VGG16)}
		}%
		\ 
		\subfigure[Expected latency.]{
			\centering
			\includegraphics[width=0.46 \linewidth, trim=0.4cm 0.1cm 0.9cm 0.7cm, clip]{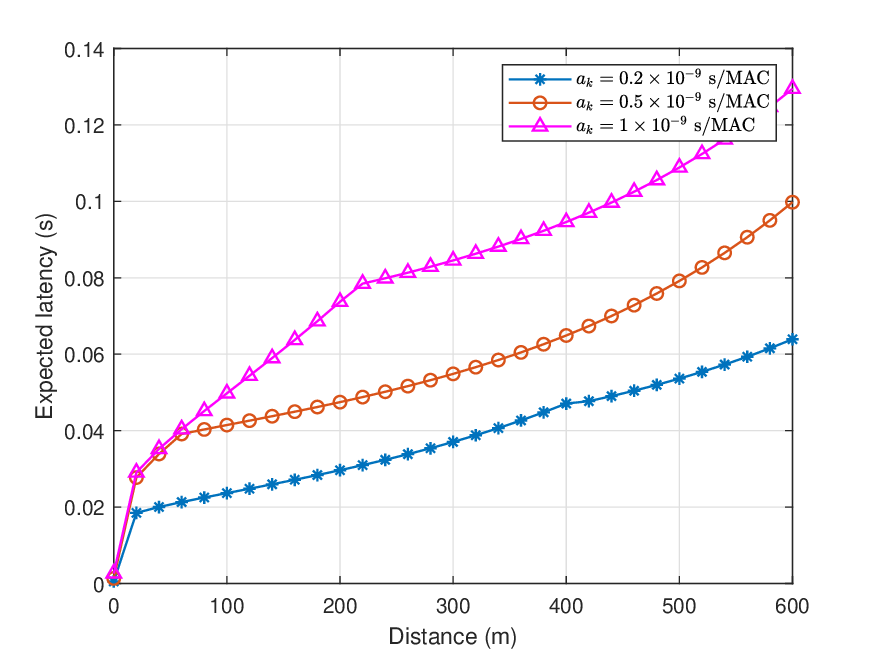}
			\vspace{-0.8cm}
			\label{fig:SNRvslatency(VGG16)}
		}%
		\centering
		\vspace{-0.2cm}
		\caption{Split point and expected latency v.s. distance with VGG16.}
			\label{fig:VGG1666}
		\vspace{-1.2cm}
	\end{figure}

%

	\subsection{Impact of the upper bound of split points}
	In the optimization of split points, $\hat{L}$ is a key parameter that trade-offs latency and training performance. Therefore, in this section we evaluate the impact of $\hat{L}$. Fig. \ref{split_upper_bound} illustrates the effect of $\hat{L}$ on split point optimization, where the horizontal coordinate is the computational frequency of the device. It can be seen that the split point selection is the same until the upper limit is reached for different $\hat{L}$. Therefore, the setting of $\hat{L}$ has no effect on the split point optimization before the upper limit is reached. However, due to $\hat{L}$, the split point cannot increase further after reaching the upper limit.
	
	\begin{figure}[h]
		\centering
		\subfigure[Optimal split points for AlexNet.]{
			\includegraphics[width=0.46 \linewidth, trim=0.8cm 0.1cm 0.9cm 0.7cm, clip]{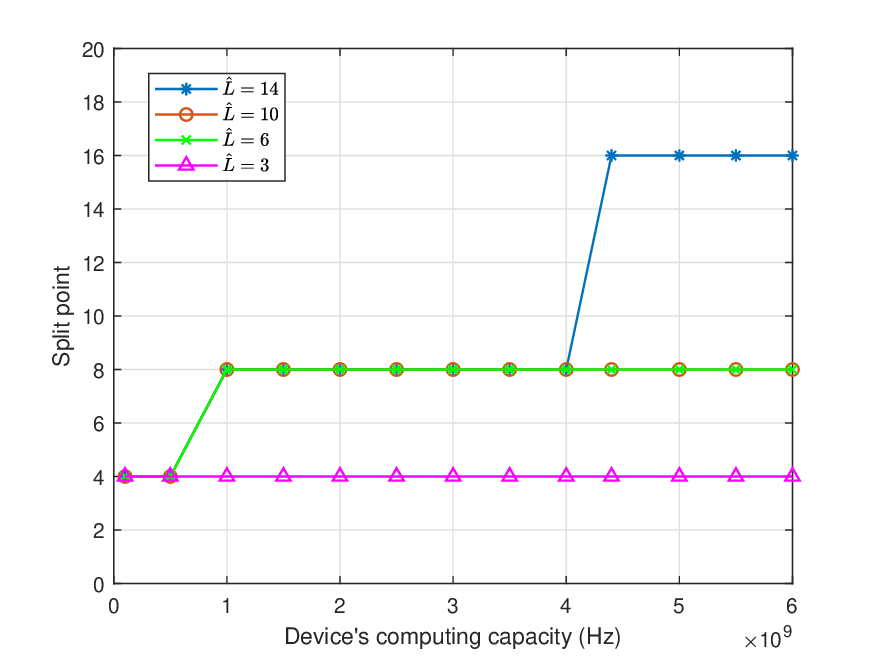}
			\vspace{-0.8cm}
		}
		\ 
		\subfigure[Optimal split points for VGG16.]{
			\includegraphics[width=0.46 \linewidth, trim=0.8cm 0.1cm 0.9cm 0.7cm, clip]{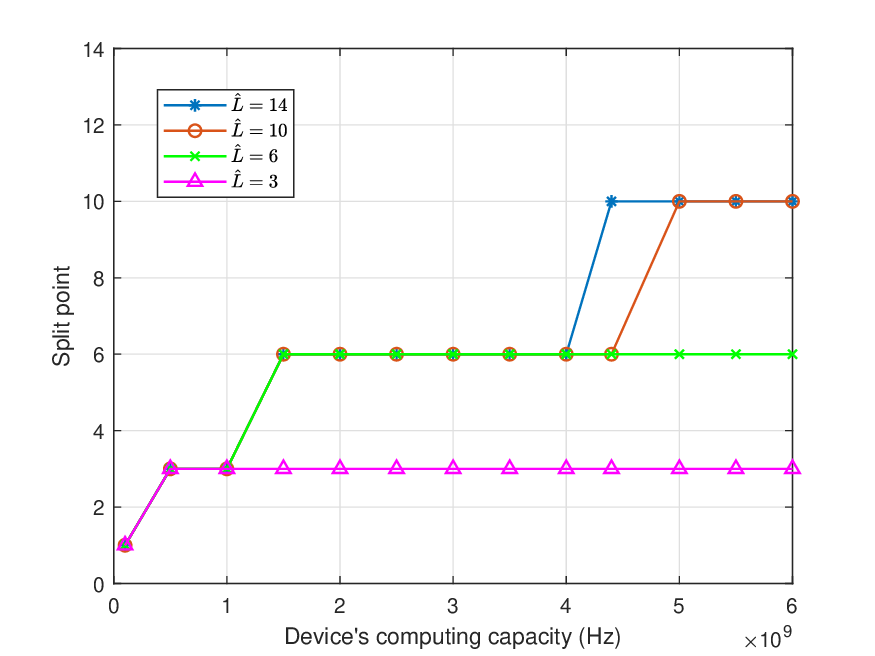}
			\vspace{-0.8cm}
		}	
		\caption{Optimal split points for different $\hat{L}$.}
		\label{split_upper_bound}
	\end{figure}

	\subsection{SFL v.s. FL}
	\begin{figure}[ht]
	\centering
	\subfigure[Time consumption for AlexNet.]{
			\centering
			\includegraphics[width=0.48 \linewidth, trim=0.1cm 0.1cm 1cm 0.6cm, clip]{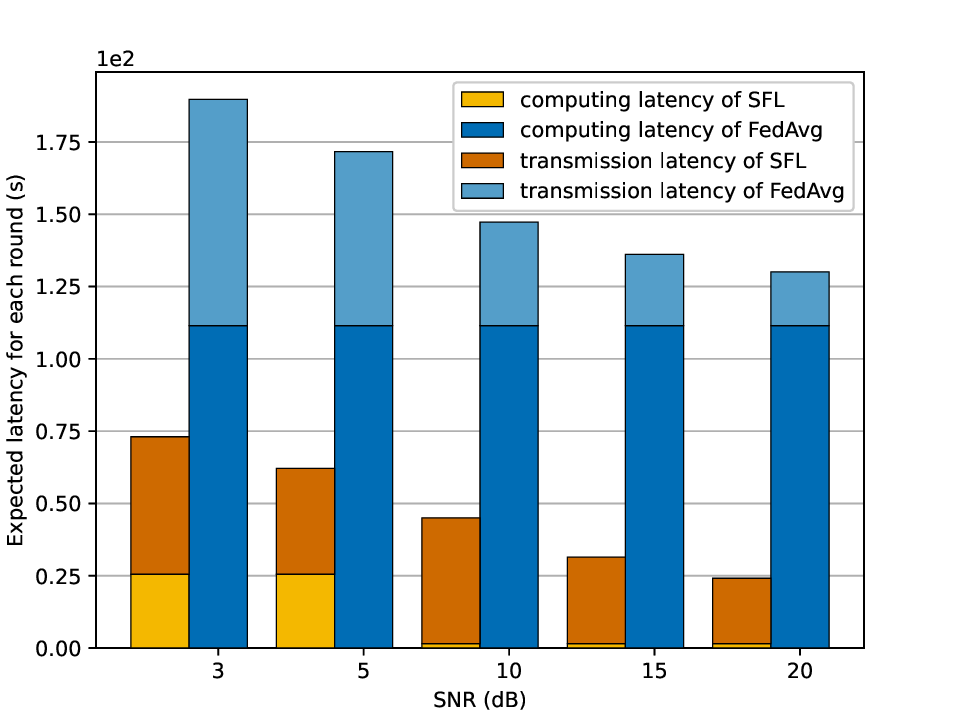}
			\vspace{-0.8cm}
			\label{fig:time_compute}
	}%
	\subfigure[Time consumption for VGG16.]{
			\centering
			\includegraphics[width=0.48 \linewidth, trim=0.1cm 0.1cm 1cm 0.6cm, clip]{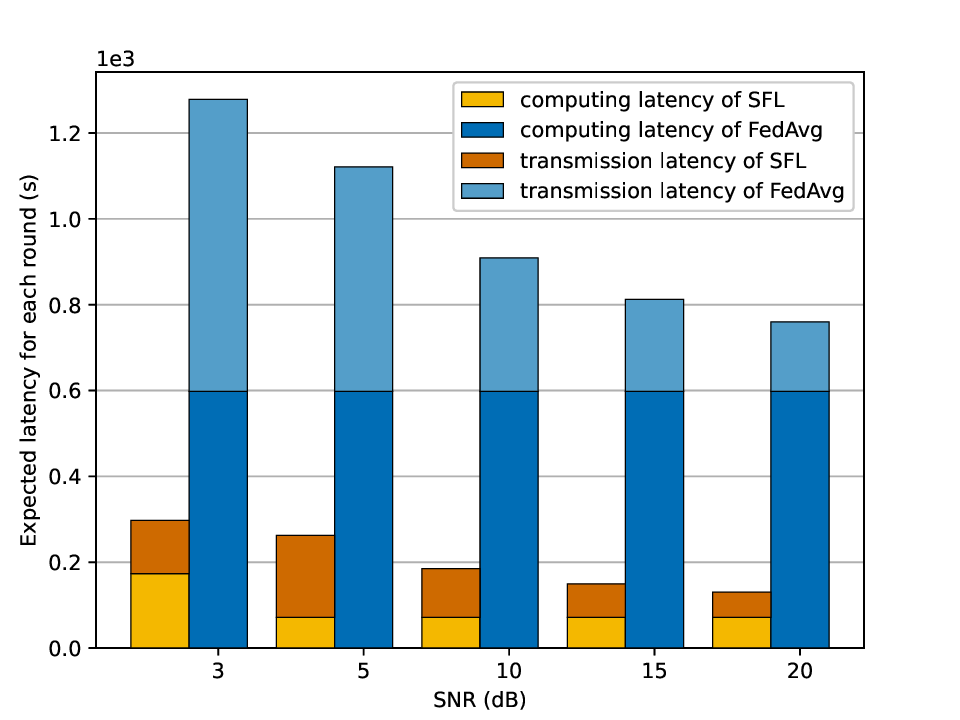}
			\vspace{-0.8cm}
			\label{fig:time_compute(VGG16)}
	}%
	\vspace{-0.2cm}
	\caption{Latency comparison between the proposed SFL and FedAvg under different communication conditions.}
		\label{fig:result_time}
	\vspace{0.6cm}

	\end{figure}	
	\begin{figure}[]
		\vspace{-0.8cm}
		\centering
		\subfigure[Time consumption for AlexNet.]{
				\centering
				\includegraphics[width=0.48 \linewidth, trim=0.1cm 0.1cm 1cm 0.6cm, clip]{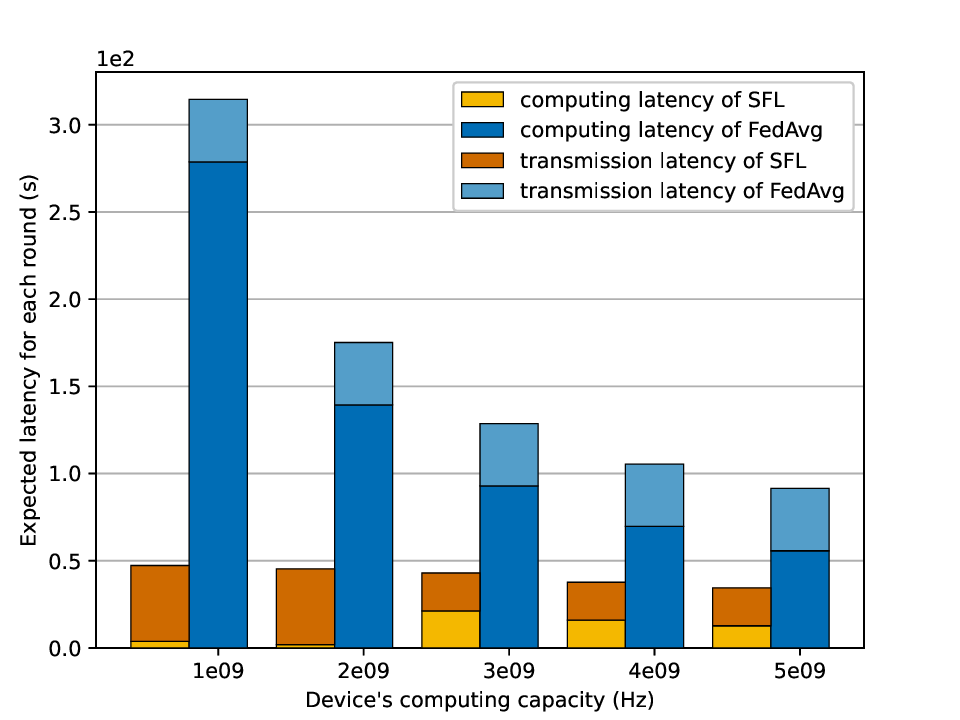}
				\vspace{-0.8cm}
				\label{fig:time_compute(AlexNet-cpu)}
		}%
		\subfigure[Time consumption for VGG16.]{
				\centering
				\includegraphics[width=0.48 \linewidth, trim=0.1cm 0.1cm 1cm 0.6cm, clip]{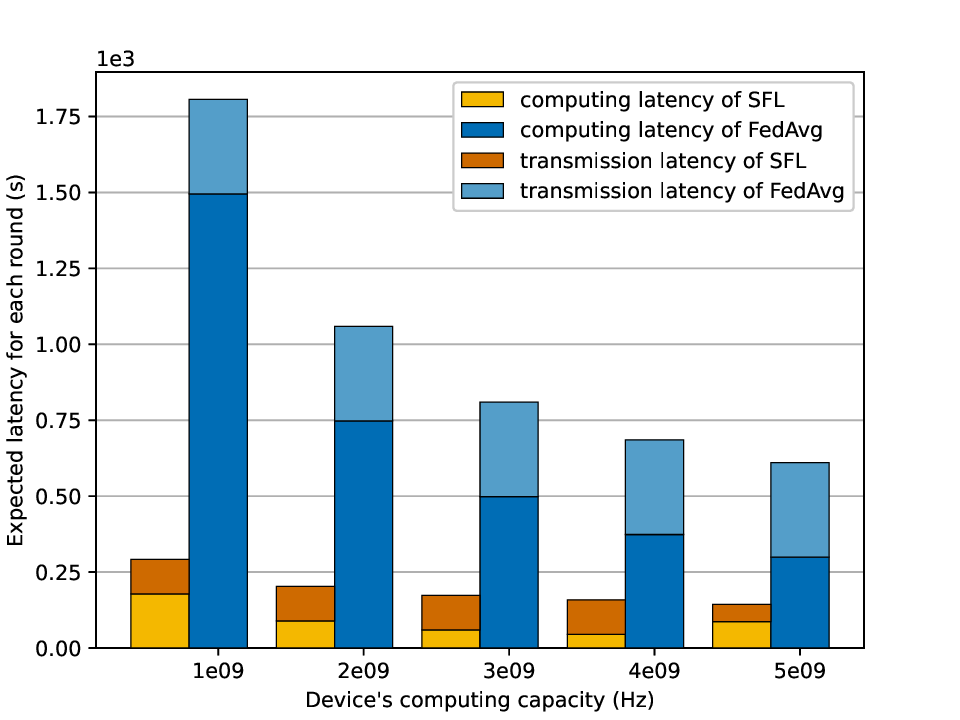}
				\vspace{-0.8cm}
				\label{fig:time_compute(VGG16-cpu)}
		}%
		\quad\quad
		\vspace{-0.2cm}
		\caption{Latency comparison between the proposed SFL and FedAvg under different computing conditions.}
		\label{fig:result_time_cpu}
		\vspace{-1.2cm}
	
	\end{figure}

	In order to test the effect of the proposed scheme, we compare the latency performance of the proposed SFL scheme and FedAvg, where Fig. \ref{fig:result_time} and Fig. \ref{fig:result_time_cpu} show the comparison under communication and computation conditions, respectively. We follow the previous experimental settings and assume that there are 1,500 training objects locally for each device. It can be seen that the computing latency of the proposed SFL has a huge gain over the classical FedAvg. For VGG16, both of the computing and transmitting latency are much lower than that of FedAvg, which saves more than 75\% latency. For AlexNet, although the communication latency of the proposed SFL scheme is a little higher than that of FedAvg in the case of better channels, the computing latency is greatly reduced, which makes the overall latency significantly lower than that of FedAvg.

	\subsection{Test Accuracy of DNN}
	
	We evaluated the training performance of our work in different situations under the MNIST dataset for handwritten digits classification and CIFAR10 for image classification. The MNIST dataset has 60,000 training images and 10,000 testing images of 10 digits and CIFAR10 has 50,000 training images and 10,000 testing images. We consider the data distribution of both IID and non-IID. In IID setting, each device is randomly assigned to the same amount of data, and each type of data has the same probability assigned to device, while in non-IID setting each device is assigned the same amount of data but the distribution of labels is uneven. 
	First, we compare the training results of different number of devices $K$, as shown in Fig. \ref{fig:numacc}, where the split point is set to be $\ell_k=4$.

	\begin{figure}[]
		\vspace{-0.1cm}
		\centering
		\subfigure[MNIST, $K=5$.]{
			\includegraphics[width=0.46 \linewidth, trim=0.1cm 0.1cm 1cm 0.6cm, clip]{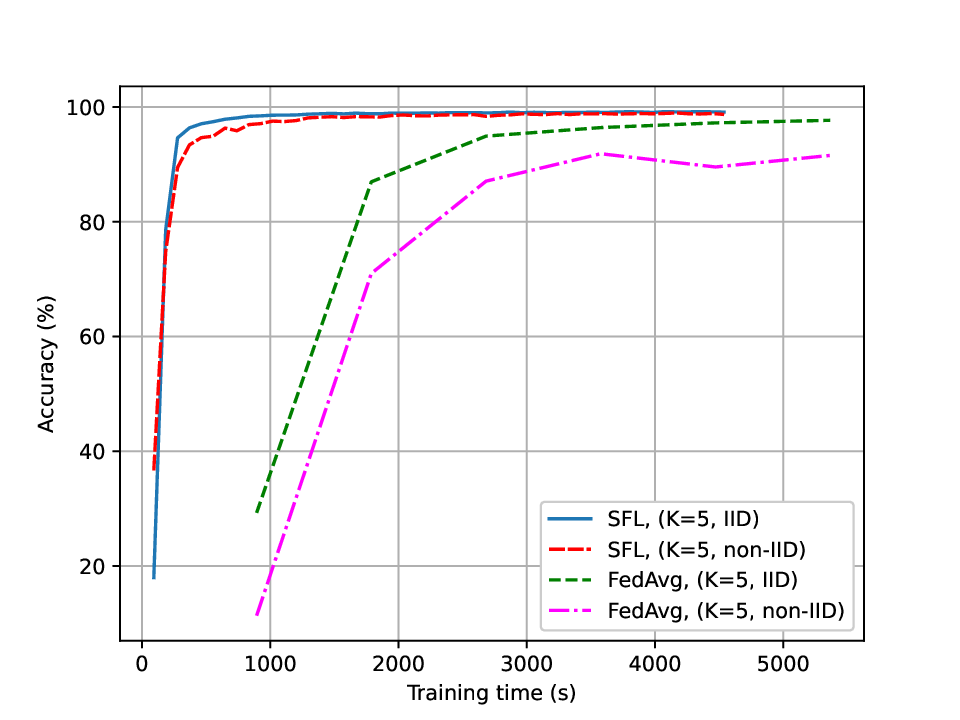}
			\vspace{-1.5cm}
		}
		\ 
		\subfigure[MNIST, $K=20$.]{
			\includegraphics[width=0.46 \linewidth, trim=0.1cm 0.1cm 1cm 0.6cm, clip]{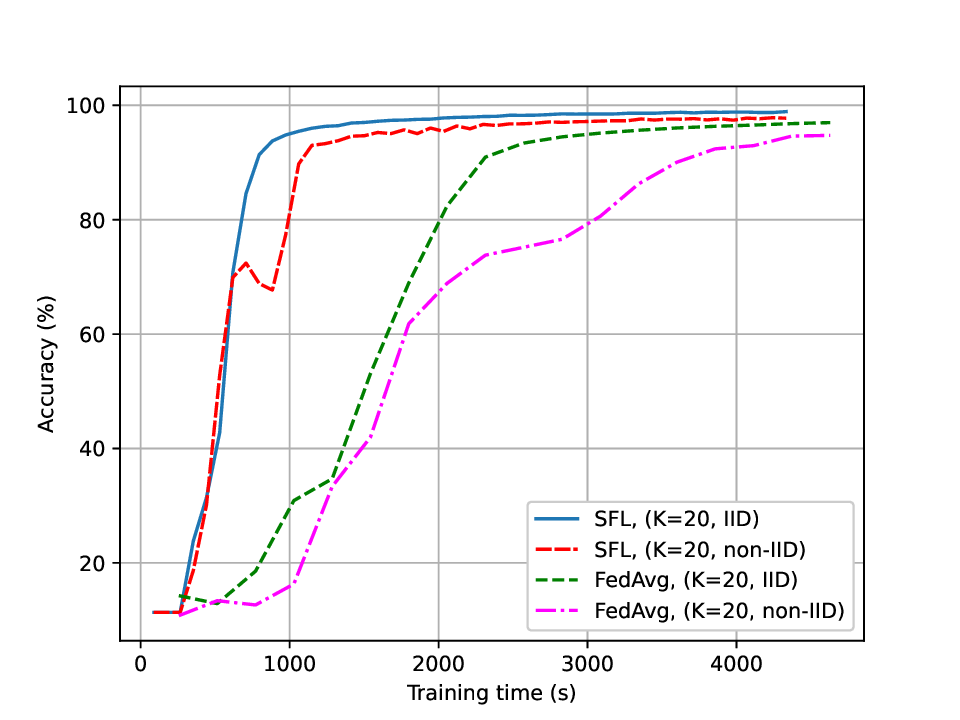}
			\vspace{-1cm}
		}
		\ 
		\subfigure[MNIST, $K=100$.]{
			\includegraphics[width=0.46 \linewidth, trim=0.1cm 0.1cm 1cm 0.6cm, clip]{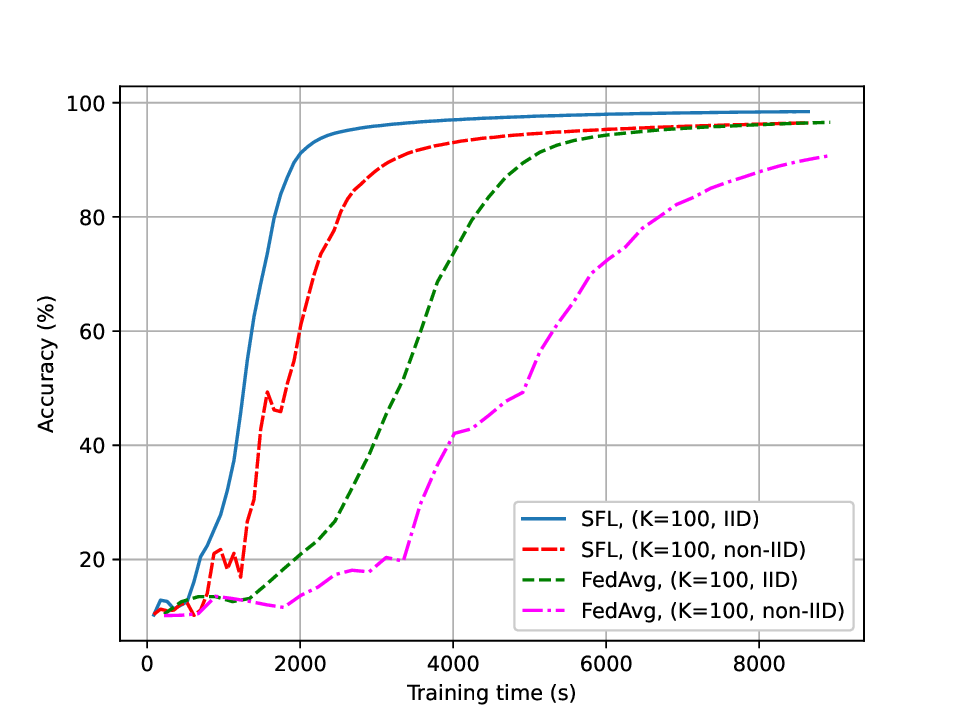}
			\vspace{-1.5cm}
		}
		\ 
		\subfigure[CIFAR10, $K=5$.]{
			\includegraphics[width=0.46 \linewidth, trim=0.1cm 0.1cm 1cm 0.6cm, clip]{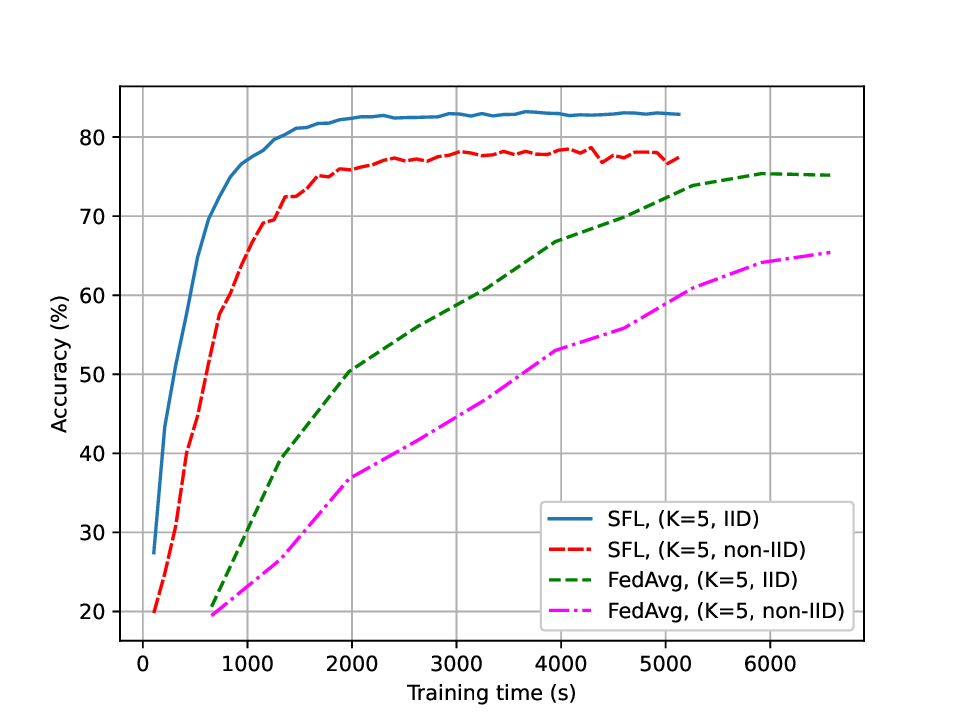}
			\vspace{-1.5cm}
		}
		\  
		\subfigure[CIFAR10, $K=20$.]{
			\includegraphics[width=0.46 \linewidth, trim=0.1cm 0.1cm 1cm 0.6cm, clip]{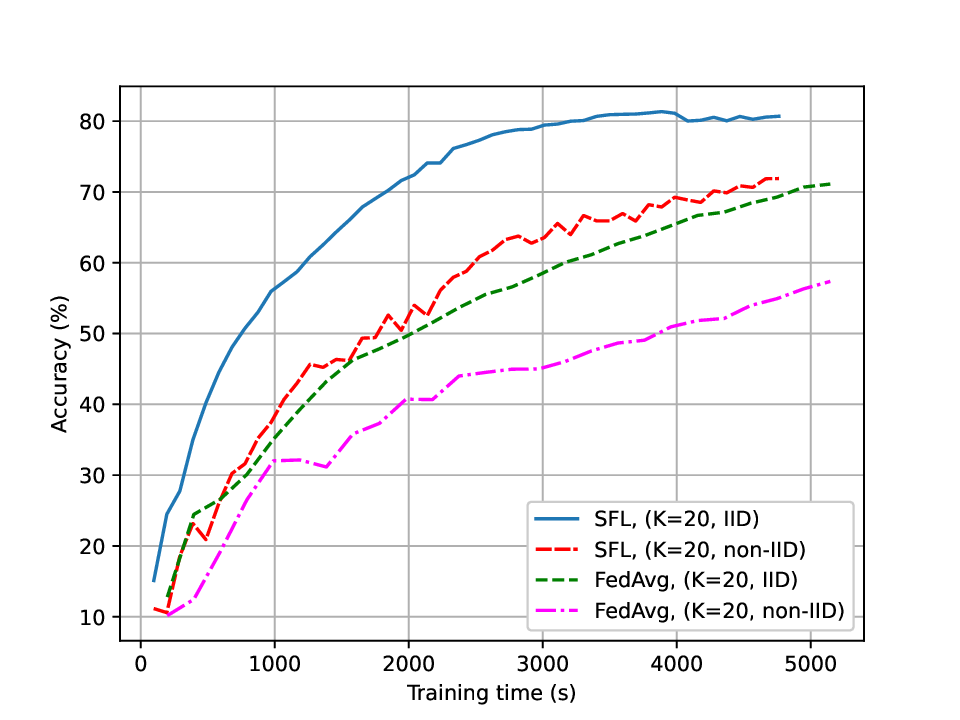}
			\vspace{-1.5cm}
		}
		\ 
		\subfigure[CIFAR10, $K=100$.]{
			\includegraphics[width=0.46 \linewidth, trim=0.1cm 0.1cm 1cm 0.6cm, clip]{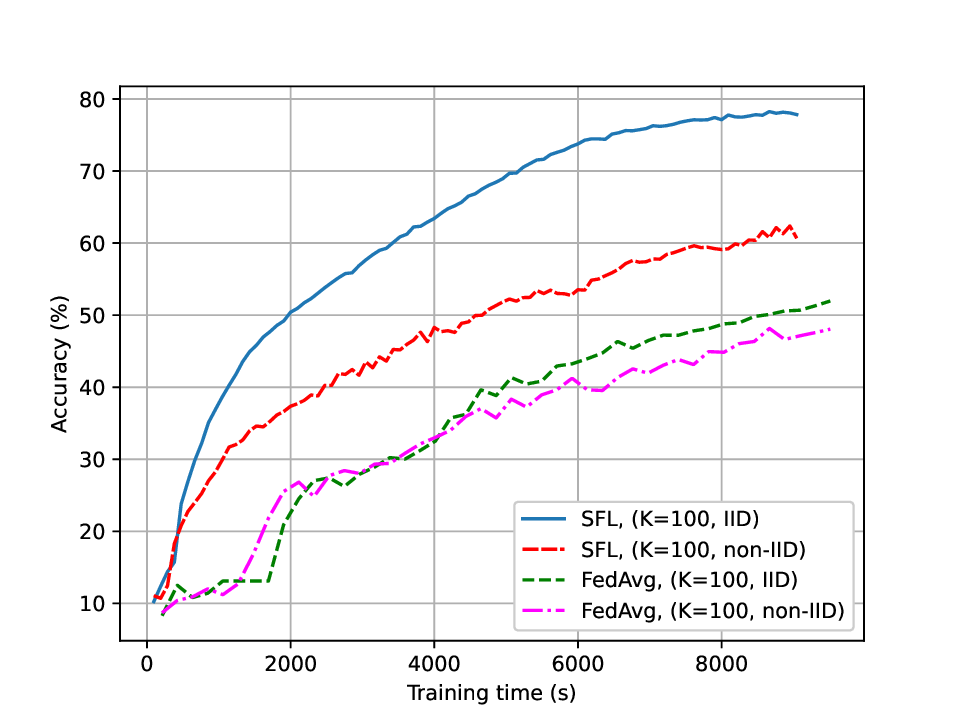}
			\vspace{-1.5cm}
		}
		\caption{Testing accuracy for different dataset and number of devices.}	
		\label{fig:numacc}
		\vspace{-0cm}
		
	\end{figure}	
	
	
	From the results we have the following observation. It is obvious that SFL can achieve higher accuracy than FL in the same time, because the latency of SFL per round is significantly lower than FL. Besides, the increase of the number of devices leads to a decrease in testing performance. 
	\begin{figure}[]
		\centering
		\subfigure[Testing accuracy for MNIST, IID.]{
			\includegraphics[width=0.46 \linewidth, trim=0.2cm 0.1cm 1cm 0.6cm, clip]{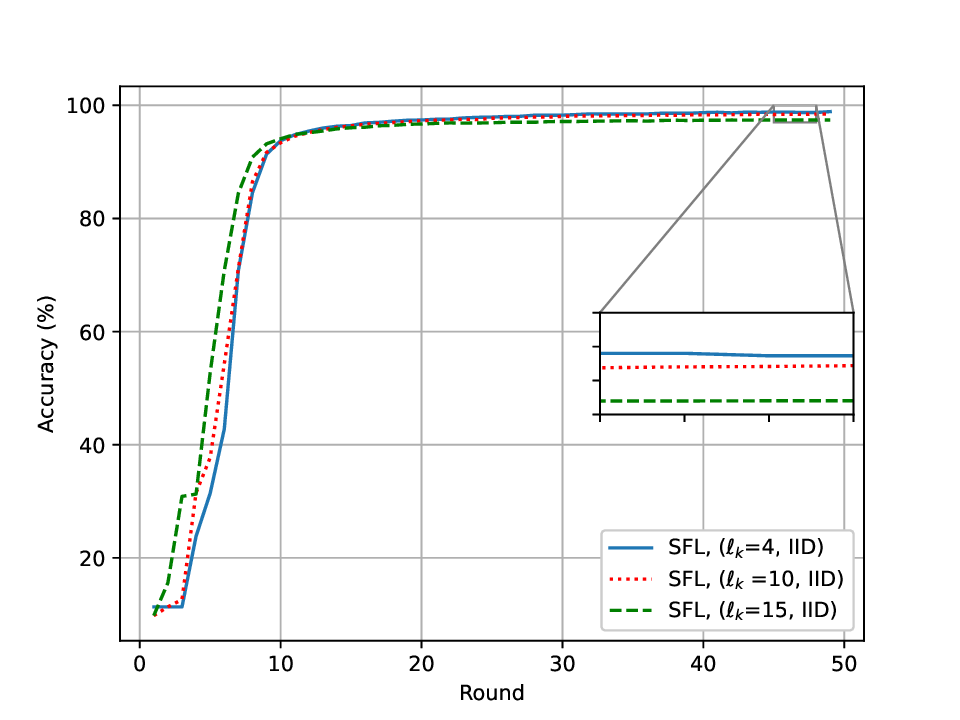}
			\vspace{-0.2cm}
		}
		\subfigure[Testing accuracy for MNIST, non-IID.]{
			\includegraphics[width=0.46 \linewidth, trim=0.2cm 0.1cm 1cm 0.6cm, clip]{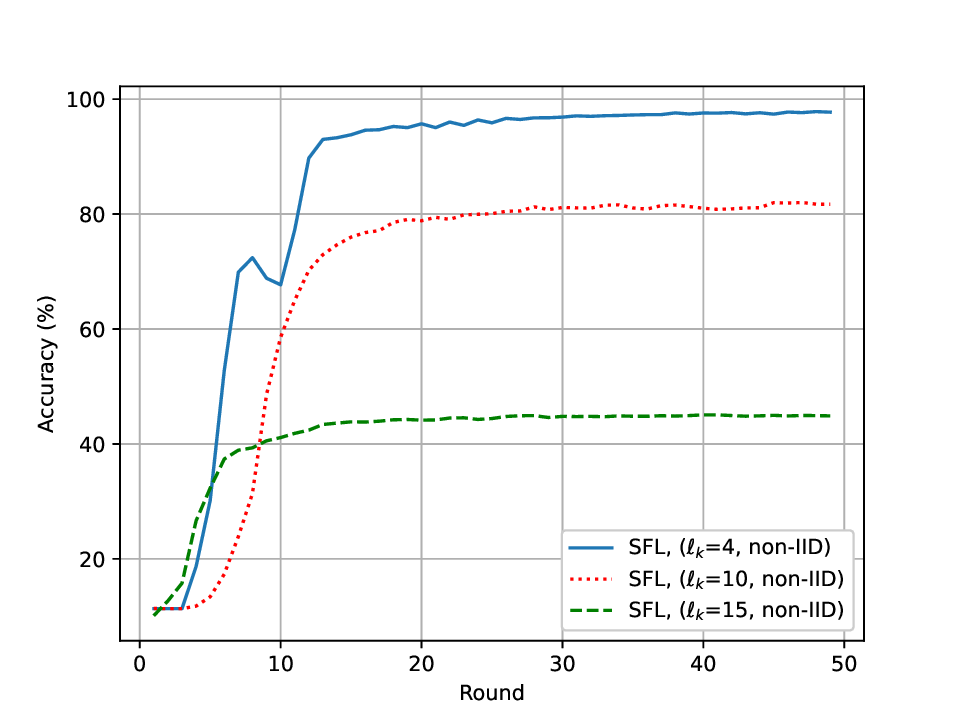}
		}
		\subfigure[Testing accuracy for CIFAR10, IID.]{
			\includegraphics[width=0.46 \linewidth, trim=0.2cm 0.1cm 1cm 0.6cm, clip]{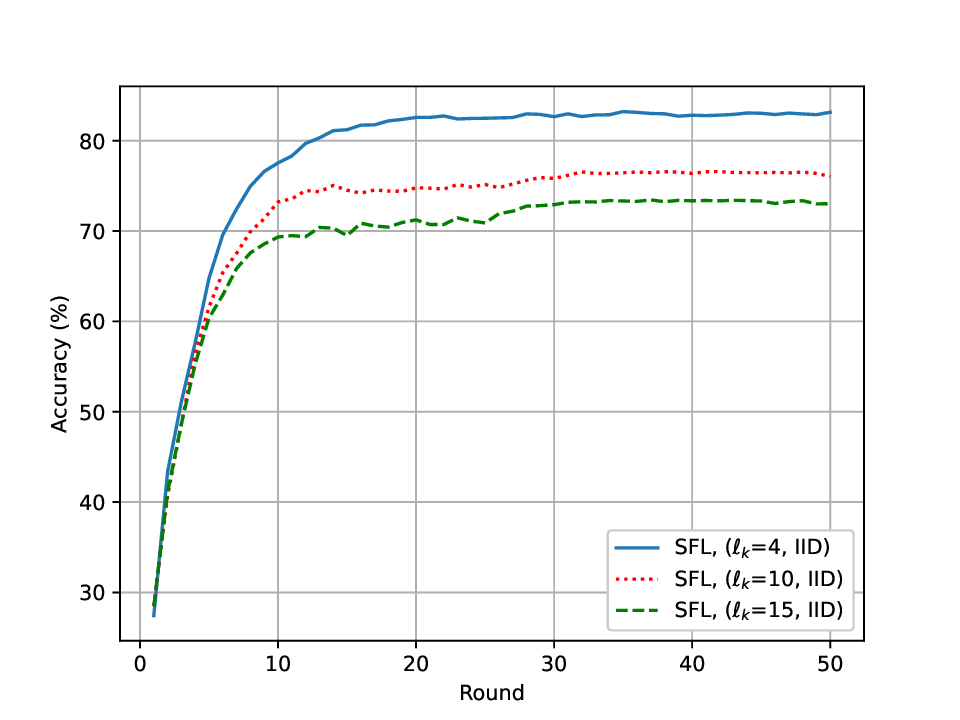}
		}
		\subfigure[Testing accuracy for CIFAR10, non-IID.]{
			\includegraphics[width=0.46 \linewidth, trim=0.2cm 0.1cm 1cm 0.6cm, clip]{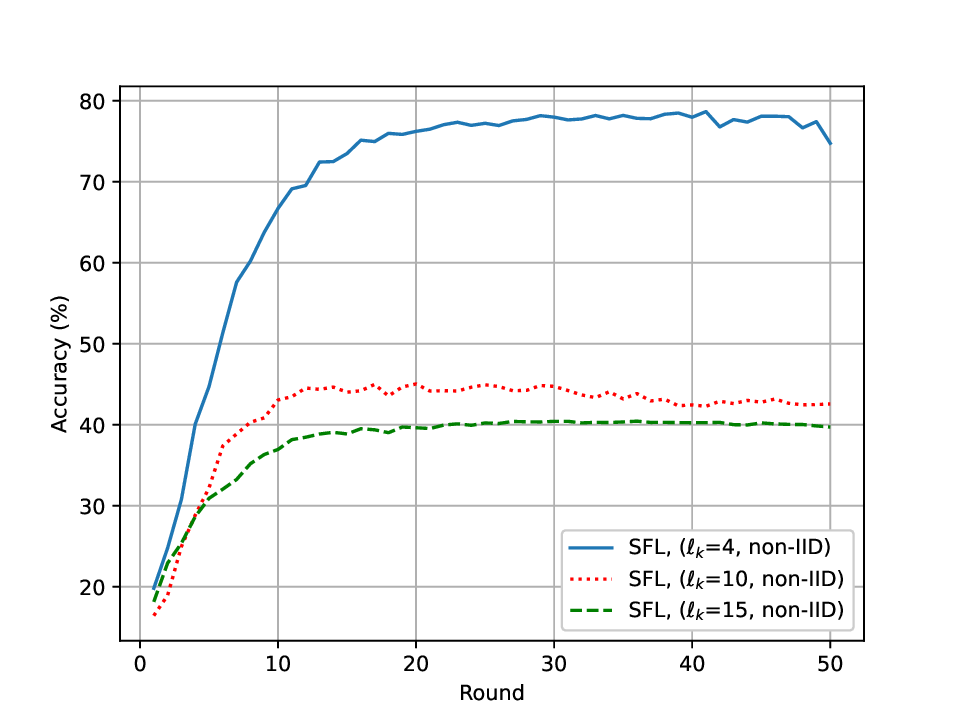}
		}
		\caption{Impact of split point on the testing accuracy. }	
		\label{fig:splitacc}
	\end{figure}

	\begin{figure}[]
	\centering
	\subfigure[Time consumption for AlexNet.]{
		\includegraphics[width=0.46 \linewidth, trim=0.2cm 0.1cm 1cm 0.6cm, clip]{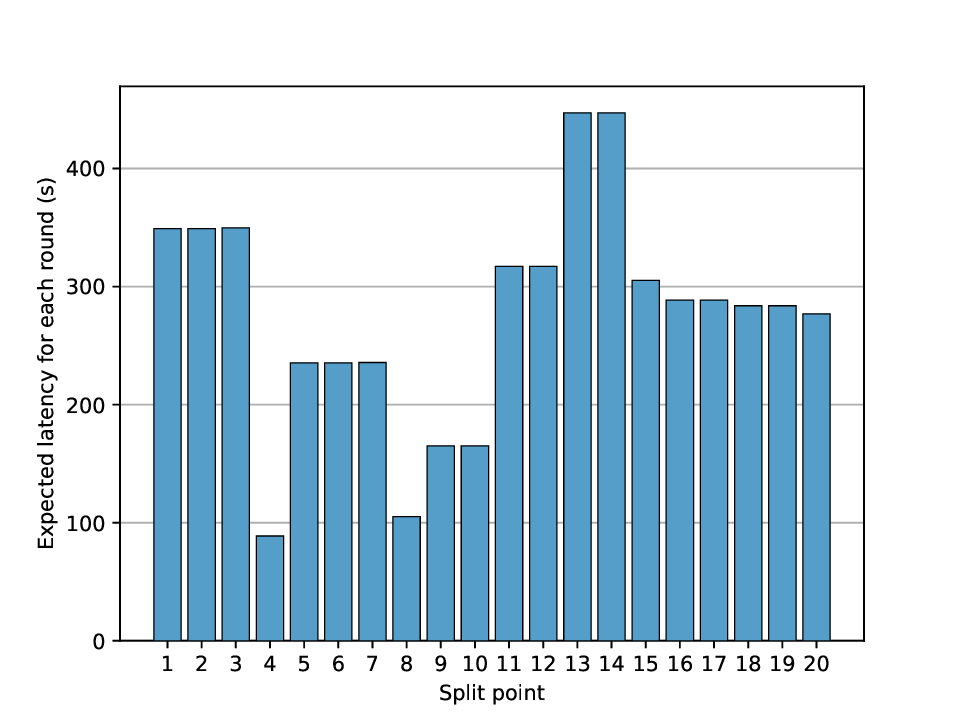}
		\vspace{-0.8cm}
	}
	\ 
	\subfigure[Time consumption for VGG16.]{
		\includegraphics[width=0.46 \linewidth, trim=0.2cm 0.1cm 1cm 0.6cm, clip]{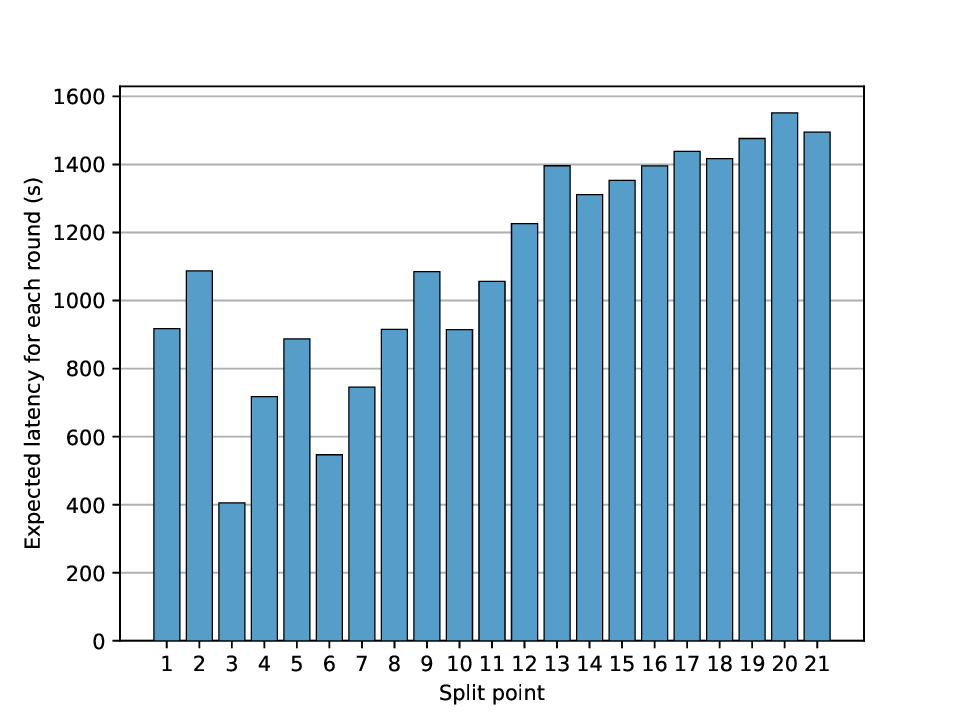}
		\vspace{-0.8cm}
	}	
	\caption{Effect of split points on latency.}
	\label{layer_time}
	\vspace{-0.8cm}
	\end{figure}	
	
	\subsection{Impact of Split Point}

	To investigate the effect of split points on learning performance, we evaluate testing accuracy at different split points. We test both MINST and CIFAR10 dataset, and set $\ell_k$ to be 4, 10 and 15 respectively. Fig. \ref{fig:splitacc} (a) and (b) show the accuracy on MNIST with IID and non-IID, and Fig. \ref{fig:splitacc} (c) and (d) show the performance on CIFAR10. It is obvious that testing accuracy increases as $\ell_k$ decreases, which is due to the reason that more parts of DNN of devices participate in aggregation, and thus the generalization is improved since more data information is used by devices cooperation. Therefore, the accuracy decrease of non-IID is more significant than IID. Fig.\ref{layer_time} shows the latency on different split points of AlexNet and VGG16 for each round. We can observe that the latency is greatly influenced by split point.


	\section{Conclusion}
	\label{sec:Conclusion}
	This paper studied the problem of joint split point selection and bandwidth allocation to minimize the total latency in SFL.
	We proposed an alternating optimization to solve the problem efficiently. 
	We obtained the following insights: First, split points of DNN is crucial in SFL, which affects the time efficiency and model accuracy. Second, compared to FL, the proposed SFL can significantly save the latency. Third, the SFL outperforms FL on test accuracy in the same time.

	\section*{Appendix}
	\appendices


	\subsection{Proof of Theorem \ref{theorem2}}\label{appendix2}
	\subsubsection{additional notation}
	For the convenience of proof, we annotate some variables in the training process. Let $\mathcal{A}$ denote the sub-model aggregation steps, i.e., $\mathcal{A}=\{nE|n=1,2,3,...\}$. When $t\in \mathcal{A}$, $\textbf{h}_{k,t}^B$ should be aggregated. Besides, we define two additional variables $\textbf{v}_{k,t+1}^u$ and $\textbf{v}_{k,t+1}^B$ to be the intermediate result of one iteration update of $\textbf{h}_{k,t}^u$ and $\textbf{h}_{k,t}^B$, and we assume the aggregation ratio for all devices are the same to be $\frac{1}{K}$. Therefore, we have following result:
	\begin{align}
		\begin{aligned}
			\left\{ \begin{array}{l@{}l}
				\textbf{v}_{k,t+1}^u & =\textbf{h}_{k,t}^u-\eta_t\left[\nabla F_k(\textbf{h}_{k,t}^u,\textbf{h}_{k,t}^B,\xi_{k,t})\right]_u;\\
				\textbf{v}_{k,t+1}^B & =\textbf{h}_{k,t}^B-\eta_t\left[\nabla F_k(\textbf{h}_{k,t}^u,\textbf{h}_{k,t}^B,\xi_{k,t})\right]_B,\\
			\end{array}
			\right.
		\end{aligned}
	\end{align}
	and
	\begin{align*}
		\textbf{h}_{k,t+1}^u = \textbf{v}_{k,t+1}^u,
	\end{align*}
	\begin{align}
		\begin{split}
			\textbf{h}_{k,t+1}^B = \left\{ \begin{array}{l@{\quad}l}
				\textbf{v}_{k,t+1}^B & \text{if}\ t+1\notin \mathcal{A};\\
				\sum_{k=1}^K\frac{1}{K} \textbf{v}_{k,t+1}^B & \text{if}\ t+1 \in \mathcal{A},
			\end{array}
			\right.
		\end{split}
	\end{align}
	To further simplify the analysis, we define the total parameter of device $k$ as $\textbf{w}_k=[\textbf{h}_k^u,\ \textbf{h}_k^B]$, and we have  $\overline{\textbf{v}}_{k,t}=\big[\textbf{v}_{k,t}^u,\sum_{k=1}^K\frac{1}{K}\textbf{v}_{k,t}^B\big]$, $\overline{\textbf{w}}_{k,t}=\big[\textbf{h}^u_{k,t},\sum_{k=1}^K\frac{1}{K}\textbf{h}_{k,t}^B\big]$. The gradient of model parameter is denoted as:
	\begin{align}
		\begin{aligned}
			&\textbf{g}_{k,t}=\Big[\big[\nabla F_k(\textbf{w}_{k,t},\xi_{k,t})\big]_u,\big[\sum_{k=1}^K\frac{1}{K}\nabla F_k(\textbf{w}_{k,t},\xi_{k,t})\big]_B\Big],\\
			&\overline{\textbf{g}}_{k,t}=\Big[\big[\nabla F_k(\textbf{w}_{k,t})\big]_u,\big[\sum_{k=1}^K\frac{1}{K}\nabla F_k(\textbf{w}_{k,t})\big]_B\Big].
		\end{aligned}
	\end{align}
	Therefore, $\overline{\textbf{v}}_{k,t+1}=\overline{\textbf{w}}_{k,t}-\eta_t\textbf{g}_{k,t}$, and $\mathbb{E}\textbf{g}_{k,t}=\overline{\textbf{g}}_{k,t}$.
	
	\subsubsection{completing the proof of Theorem \ref{theorem2}}
	Because there are $K$ different lower part of DNN parameters, we analyse the expected average difference between device's parameter and optimal parameter $\mathbb{E}(\sum_{k=1}^K\frac{1}{K}||\overline{\textbf{v}}_{k,t+1}-\textbf{w}^*||^2)$, as following:
	\begin{align}
		\begin{aligned}
			&\mathbb{E}\sum_{k=1}^K\frac{1}{K}||\overline{\textbf{v}}_{k,t+1}-\textbf{w}^*||^2\\
			&=\mathbb{E}\sum_{k=1}^K\frac{1}{K}||\overline{\textbf{w}}_{k,t}-\eta_t\textbf{g}_{k,t}-\textbf{w}^*-\eta_t\overline{\textbf{g}}_{k,t}+\eta_t\overline{\textbf{g}}_{k,t}||^2\\
			&=\mathbb{E}\sum_{k=1}^K\frac{1}{K}\Bigl(\underbrace{||\overline{\textbf{w}}_{k,t}-\textbf{w}^*-\eta_t\overline{\textbf{g}}_{k,t}||^2}_{A_1}\\
			&\quad +\underbrace{2\eta_t\big\langle \overline{\textbf{w}}_{k,t}-\textbf{w}^*-\eta_t\overline{\textbf{g}}_{k,t},\overline{\textbf{g}}_{k,t}-\textbf{g}_{k,t} \big\rangle}_{A_2}\\&\quad+\underbrace{\eta_t^2||\textbf{g}_{k,t}-\overline{\textbf{g}}_{k,t}||^2}_{A_3}\Bigr).
		\end{aligned}
	\end{align} 
	Notice that we have $\mathbb{E}\textbf{g}_{k,t}=\overline{\textbf{g}}_{k,t}$ in the previous analysis, therefore $\mathbb{E}A_2=0$, we first bound $A_1$. 
	\begin{align}
		\begin{aligned}
			A_1&=\sum_{k=1}^K\frac{1}{K}||\overline{\textbf{w}}_{k,t}-\textbf{w}^*-\eta_t\overline{\textbf{g}}_{k,t}||^2\\&=\sum_{k=1}^K\frac{1}{K}\ ||\overline{\textbf{w}}_{k,t}-\textbf{w}^*||^2\\
			&\quad\underbrace{-2\eta_t\big\langle \overline{\textbf{w}}_{k,t}-\textbf{w}^*,\overline{\textbf{g}}_{k,t}\big\rangle}_{B_1}\\
			&\quad+\underbrace{\eta_t^2||\overline{\textbf{g}}_{k,t}||^2}_{B_2}.
		\end{aligned}
	\end{align}
	Now we analyze $B_1$ and $B_2$. For simplicity, we use $\overline{\textbf{h}}_{t}^B$ to denote $\sum_{k=1}^K\frac{1}{K}\textbf{h}_{k,t}^B$
	\begin{align}
		\begin{aligned}
			B_1&=-\sum_{k=1}^K\frac{1}{K}2\eta_t\big\langle \overline{\textbf{w}}_{k,t}-\textbf{w}^*,\overline{\textbf{g}}_{k,t}\big\rangle\\
			&=-\sum_{k=1}^K\frac{1}{K}2\eta_t\Big\langle \overline{\textbf{w}}_{k,t}-\textbf{w}^{*},\\
			&\quad\Big[\big[\nabla F_k(\textbf{w}_{k,t})\big]_u,\big[\sum_{k=1}^K\frac{1}{K}\nabla F_k(\textbf{w}_{k,t})\big]_B\Big]\Big\rangle\\
			&=-\sum_{k=1}^K\frac{1}{K}2\eta_t\Big\langle \overline{\textbf{w}}_{k,t}-\textbf{w}^{*},\Big[\big[\nabla F_k(\textbf{w}_{k,t})\big]_u,\big[\nabla F_k(\textbf{w}_{k,t})\big]_B\Big]\Big\rangle\\
			&=-\sum_{k=1}^K\frac{1}{K}2\eta_t\Big\langle \overline{\textbf{w}}_{k,t}-\textbf{w}_{k,t},\nabla F_k(\textbf{w}_{k,t})\Big\rangle\\
			&\quad-\sum_{k=1}^K\frac{1}{K}2\eta_t\Big\langle \textbf{w}_{k,t}-\textbf{w}^*,\nabla F_k(\textbf{w}_{k,t})\Big\rangle\\
			&\leq\eta_t\sum_{k=1}^K\frac{1}{K}\Big(\frac{1}{\eta_t}\big\Vert\overline{\textbf{w}}_{k,t}-\textbf{w}_{k,t}\big\Vert^2+\eta_t\big\Vert\nabla F_k(\textbf{w}_{k,t})\big\Vert^2\Big)\\
			&\quad-2\eta_t\sum_{k=1}^K\frac{1}{K}\big\langle\textbf{w}_{k,t}-\textbf{w}^{*},\nabla F_k(\textbf{w}_{k,t})\big\rangle.
		\end{aligned}
	\end{align}
	The last inequality is due to the AM-GM inequality. By the µ-strong convexity of $F_k(\cdot)$, we have
	\begin{align}
		\begin{aligned}
			&-\big\langle\textbf{w}_{k,t}-\textbf{w}^{*},\nabla F_k(\textbf{w}_{k,t})\big\rangle\\
			&\quad\quad\leq-\Big(F_k(\textbf{w}_{k,t})-F^*+\frac{\mu}{2}\big\Vert\textbf{w}_{k,t}-\textbf{w}^*\big\Vert^2\Big).
		\end{aligned}
	\end{align}
	Therefore, $B_1$ satisfies:
	\begin{align}
		\begin{aligned}
			B_1=&-\sum_{k=1}^K\frac{1}{K}2\eta_t\big\langle \overline{\textbf{w}}_{k,t}-\textbf{w}^*,\overline{\textbf{g}}_{k,t}\big\rangle\\
			\leq&\eta_t\sum_{k=1}^K\frac{1}{K}\Big(\frac{1}{\eta_t}\big\Vert\overline{\textbf{w}}_{k,t}-\textbf{w}_{k,t}\big\Vert^2+\eta_t\big\Vert\nabla F_k(\textbf{w}_{k,t})\big\Vert^2\Big)\\
			&-2\eta_t\sum_{k=1}^K\frac{1}{K}\Big(F_k(\textbf{w}_{k,t})-F^*+\frac{\mu}{2}\big\Vert\textbf{w}_{k,t}-\textbf{w}^*\big\Vert^2\Big).
		\end{aligned}
	\end{align}
	For $B_2$ we have 
	\begin{align}\label{eqn:B_2}
		\begin{aligned}
			B_2=&\sum_{k=1}^K\frac{1}{K}\eta_t^2\big{\Vert}\overline{\textbf{g}}_{k,t}\big{\Vert}^2\\
			=&\eta_t^2\sum_{k=1}^K\frac{1}{K}\big{\Vert}([\nabla F_k(\textbf{w}_{k,t})]_u,\sum_{k=1}^K\frac{1}{K}[\nabla F_k(\textbf{w}_{k,t})]_B)\big{\Vert}^2\\
			=&\eta_t^2\sum_{k=1}^K\frac{1}{K}\big{\Vert}[\nabla F_k(\textbf{w}_{k,t})]_u\big{\Vert}^2+\eta_t^2\big{\Vert}\sum_{k=1}^K\frac{1}{K}[\nabla F_k(\textbf{w}_{k,t})]_B\big{\Vert}^2\\
			\leq&\eta_t^2\sum_{k=1}^K\frac{1}{K}\big{\Vert}[\nabla F_k(\textbf{w}_{k,t})]_u\big{\Vert}^2+\eta_t^2\sum_{k=1}^K\frac{1}{K^2}\big{\Vert}[\nabla F_k(\textbf{w}_{k,t})]_B\big{\Vert}^2\\
			\leq&\eta_t^2\Big(lZ^2+\frac{1}{K}(L-l)Z^2\Big).
		\end{aligned}
	\end{align}
	Where the last inequality is due to the assumption that the squared norm of the gradient for each layer has upper bound $Z^2$. Therefore, by the analysis of $B_1$ and $B_2$, it follows that
	\begin{align}
		\begin{aligned}
			A_1=&\sum_{k=1}^K\frac{1}{K}\big{\Vert}\overline{\textbf{w}}_{k,t}-\textbf{w}^*-\eta_t\overline{\textbf{g}}_{k,t}\big{\Vert}^2\\
			\leq&\sum_{k=1}^K\frac{1}{K}\big{\Vert}\overline{\textbf{w}}_{k,t}-\textbf{w}^*\big{\Vert}^2\\
			&+\eta_t\sum_{k=1}^K\frac{1}{K}\Big(\frac{1}{\eta_t}\big\Vert\overline{\textbf{w}}_{k,t}-\textbf{w}_{k,t}\big\Vert^2+\eta_t\big\Vert\nabla F_k(\textbf{w}_{k,t})\big\Vert^2\Big)\\
			&-2\eta_t\sum_{k=1}^K\frac{1}{K}\Big(F_k(\textbf{w}_{k,t})-F^*+\frac{\mu}{2}\big\Vert\textbf{w}_{k,t}-\textbf{w}^*\big\Vert^2\Big)\\
			&+\eta_t^2\Big(lZ^2+\frac{1}{K}(L-l)Z^2\Big)\\
			\leq&(1-\mu\eta_t)\sum_{k=1}^K\frac{1}{K}\big{\Vert}\overline{\textbf{w}}_{k,t}-\textbf{w}^*\big{\Vert}^2\\
			&+\sum_{k=1}^K\frac{1}{K}\big\Vert\overline{\textbf{w}}_{k,t}-\textbf{w}_{k,t}\big\Vert^2-2\eta_t\sum_{k=1}^K\frac{1}{K}\big(F_k(\textbf{w}_{k,t})-F^*\big)\\
			&+\eta_t^2\Big(lZ^2+\frac{1}{K}(L-l)Z^2\Big)+\eta_t^2\sum_{k=1}^K\frac{1}{K}\big\Vert\nabla F_k(\textbf{w}_{k,t})\big\Vert^2.\\
		\end{aligned}
	\end{align}
	By the $\beta$-smoothness of $F_k(\cdot)$, we have 
	\begin{align}\label{eqn:beta-smooth}
			\big{\Vert}\nabla F_k(\textbf{w}_{k,t})\big{\Vert}^2\leq 2\beta\big(F_k(\textbf{w}_{k,t})-F_k^*\big).
	\end{align}
	Therefore, for $A_1$ we have
%
%

	\begin{align}
		\begin{aligned}
			&A_1\\
			&\leq(1-\mu\eta_t)\sum_{k=1}^K\frac{1}{K}\big{\Vert}\overline{\textbf{w}}_{k,t}-\textbf{w}^*\big{\Vert}^2+\sum_{k=1}^K\frac{1}{K}\big\Vert\overline{\textbf{w}}_{k,t}-\textbf{w}_{k,t}\big\Vert^2\\
			&\quad+\eta_t^2\Big(lZ^2+\frac{1}{K}(L-l)Z^2\Big)\\
			&\quad\underbrace{+2\beta\eta_t^2\sum_{k=1}^K\frac{1}{K}\big(F_k(\textbf{w}_{k,t})-F_k^*\big)-2\eta_t\sum_{k=1}^K\frac{1}{K}\big(F_k(\textbf{w}_{k,t})-F^*\big)}_C.
		\end{aligned}
	\end{align}
	Next we bound $C$. Define $\gamma_t=2\eta_t(1-\beta\eta_t)$, we have 
	\begin{align}
		\begin{aligned}
			C=&-2\eta_t(1-\beta\eta_t)\sum_{k=1}^K\frac{1}{K}\big(F_k(\textbf{w}_{k,t})-F_k^*\big)\\
			&+2\eta_t\sum_{k=1}^K\frac{1}{K}\big(F^*-F_k^*\big)\\
			=&-\gamma_t\sum_{k=1}^K\frac{1}{K}\big(F_k(\textbf{w}_{k,t})-F^*\big)\\
			&+(2\eta_t-\gamma_t)\sum_{k=1}^K\frac{1}{K}\big(F^*-F_k^*\big)\\
			=&\underbrace{-\gamma_t\sum_{k=1}^K\frac{1}{K}\big(F_k(\textbf{w}_{k,t})-F^*\big)}_D+2\beta\eta_t^2\Gamma,
		\end{aligned}
	\end{align}
	where $\Gamma=\sum_{k=1}^K\frac{1}{K}\big(F^*-F_k^*\big)$. To bound $D$, we have
	\begin{align}
		\begin{aligned}
			&\sum_{k=1}^K\frac{1}{K}\big(F_k(\textbf{w}_{k,t})-F_k^*\big)\\
			&=\sum_{k=1}^K\frac{1}{K}\big(F_k(\textbf{w}_{k,t})-F_k(\overline{\textbf{w}}_{k,t})\big)+\sum_{k=1}^K\frac{1}{K}\big(F_k(\overline{\textbf{w}}_{k,t})-F^*\big)\\
			&\geq\sum_{k=1}^K\frac{1}{K}\big{\langle}\nabla F_k(\overline{\textbf{w}}_{k,t}),\textbf{w}_{k,t}-\overline{\textbf{w}}_{k,t}\big{\rangle}+\sum_{k=1}^K\frac{1}{K}\big(F_k(\overline{\textbf{w}}_{k,t})-F^*\big)\\
			&\geq-\frac{1}{2}\sum_{k=1}^K\frac{1}{K}\big(\eta_t\big{\Vert}\nabla F_k(\overline{\textbf{w}}_{k,t})\big{\Vert}^2+\frac{1}{\eta_t}\big{\Vert}\textbf{w}_{k,t}-\overline{\textbf{w}}_{k,t}\big{\Vert}^2\big)\\&\quad+\sum_{k=1}^K\frac{1}{K}\big(F_k(\overline{\textbf{w}}_{k,t})-F^*\big)\\
			&\geq-\sum_{k=1}^K\frac{1}{K}\big(\eta_t\beta(F_k(\overline{\textbf{w}}_{k,t})-F^*)+\frac{1}{2\eta_t}\big{\Vert}\textbf{w}_{k,t}-\overline{\textbf{w}}_{k,t}\big{\Vert}^2\big)\\&\quad+\sum_{k=1}^K\frac{1}{K}\big(F_k(\overline{\textbf{w}}_{k,t})-F^*\big),
		\end{aligned}
	\end{align}
	where the first inequality is due to the convexity of $F(\cdot)$, the second inequality is from AM-GM inequality, and the last inequality results from \eqref{eqn:beta-smooth}. Therefore, for $C$ we have
	\begin{align}
		\begin{aligned}
			C\leq&\gamma_t\sum_{k=1}^K\frac{1}{K}\big(\eta_t\beta(F_k(\overline{\textbf{w}}_{k,t})-F^*)+\frac{1}{2\eta_t}\big{\Vert}\textbf{w}_{k,t}-\overline{\textbf{w}}_{k,t}\big{\Vert}^2\big)\\&-\gamma_t\sum_{k=1}^K\frac{1}{K}\big(F_k(\overline{\textbf{w}}_{k,t})-F^*\big)+2\beta\eta_t^2\Gamma\\
			=&\gamma_t(\eta_t\beta-1)\sum_{k=1}^K\frac{1}{K}(F_k(\overline{\textbf{w}}_{k,t})-F^*)\\
			&+\frac{\gamma_t}{2\eta_t}\sum_{k=1}^K\frac{1}{K}\big{\Vert}\textbf{w}_{k,t}-\overline{\textbf{w}}_{k,t}\big{\Vert}^2	+(2\beta\eta_t^2+\gamma_t\eta_t\beta)\Gamma\\
			\leq&\sum_{k=1}^K\frac{1}{K}\big{\Vert}\textbf{w}_{k,t}-\overline{\textbf{w}}_{k,t}\big{\Vert}^2+4\beta\eta_t^2\Gamma,
		\end{aligned}
	\end{align}
	where in the last inequality, we use the fact that $\eta_t\beta-1\leq 0$, $F_k(\overline{\textbf{w}}_{k,t})-F^*\geq 0$, $\Gamma\geq 0$, and $\gamma_t\leq 2\eta_t$.
	
	Therefore,
	\begin{align}
		\begin{aligned}
			A_1=&\sum_{k=1}^K\frac{1}{K}\big{\Vert}\overline{\textbf{w}}_{k,t}-\textbf{w}^*-\eta_t\overline{\textbf{g}}_{k,t}\big{\Vert}^2\\
			\leq&(1-\mu\eta_t)\sum_{k=1}^K\frac{1}{K}\big{\Vert}\overline{\textbf{w}}_{k,t}-\textbf{w}^*\big{\Vert}^2\\
			&+2\sum_{k=1}^K\frac{1}{K}\big\Vert\overline{\textbf{w}}_{k,t}-\textbf{w}_{k,t}\big\Vert^2+4\beta\eta_t^2\Gamma.
		\end{aligned}
	\end{align}
	Next we bound the expectation of $A_3$ and $A_1$. For $A_3$, 
	\begin{align}
		\begin{aligned}
			\mathbb{E}A_3=&\mathbb{E}\sum_{k=1}^K\frac{1}{K}\big{\Vert}\textbf{g}_{k,t}-\overline{\textbf{g}}_{k,t}\big{\Vert}^2\\
			=&\sum_{k=1}^K\frac{1}{K}\mathbb{E}\big{\Vert}\big[\nabla F_k(\textbf{w}_{k,t},\xi_{k,t})\big]_u-\big[\nabla F_k(\textbf{w}_{k,t})\big]_u\big{\Vert}^2\\
			&+\sum_{k=1}^K\frac{1}{K}\mathbb{E}\big{\Vert}\sum_{k=1}^K\frac{1}{K}\big(\big[\nabla F_k(\textbf{w}_{k,t},\xi_{k,t})\big]_B\\
			&\quad\quad-\big[\nabla F_k(\textbf{w}_{k,t})\big]_B\big)\big{\Vert}^2\\
			=&\sum_{k=1}^K\frac{1}{K}\mathbb{E}\big{\Vert}\big[\nabla F_k(\textbf{w}_{k,t},\xi_{k,t})\big]_u-\big[\nabla F_k(\textbf{w}_{k,t})\big]_u\big{\Vert}^2\\
			&+\sum_{k=1}^K\frac{1}{K^2}\mathbb{E}\big{\Vert}\big[\nabla F_k(\textbf{w}_{k,t},\xi_{k,t})\big]_B-\big[\nabla F_k(\textbf{w}_{k,t})\big]_B\big{\Vert}^2\\
			\leq&l\sigma^2+\frac{1}{K}(L-l)\sigma^2.
		\end{aligned}
	\end{align}
	
	To bound $\mathbb{E}\sum_{k=1}^K\frac{1}{K}\big\Vert\overline{\textbf{w}}_t^B-\textbf{w}_{k,t}^B\big\Vert^2$, we assume $E$ steps between two aggregations, and let $t_0$ be the first iteration from some aggregation. Therefore, for any $t\geq 0$, there exists a $t_0 \geq t$, such that $t-t_0 \leq E-1 $ and $\textbf{w}_{k,t_0}=\overline{\textbf{w}}_{k,t_0}$ then
	
	\begin{align}
		\begin{aligned}
			&\mathbb{E}\sum_{k=1}^K\frac{1}{K}\big\Vert\overline{\textbf{w}}_{k,t}-\textbf{w}_{k,t}\big\Vert^2\\
			&=\mathbb{E}\sum_{k=1}^K\frac{1}{K}\big\Vert(\textbf{w}_{k,t}-\overline{\textbf{w}}_{k,t_0})-(\overline{\textbf{w}}_{k,t}-\overline{\textbf{w}}_{k,t_0})\big\Vert^2\\
			&\leq\mathbb{E}\sum_{k=1}^K\frac{1}{K}\big\Vert\textbf{w}_{k,t}-\overline{\textbf{w}}_{k,t_0}\big\Vert^2\\
			&\leq\sum_{k=1}^K\frac{1}{K}\mathbb{E}\sum_{t=t_0}^{t}(E-1)\eta_t^2\big{\Vert}\nabla F_k(\textbf{w}_{k,t},\xi_{k,t})\big{\Vert}^2\\
			&\leq(E-1)^2\eta_t^2LZ^2,
		\end{aligned}
	\end{align}
	where the first inequality is due to $\mathbb{E}\big{\Vert}X-\mathbb{E}X\big{\Vert}^2\leq\mathbb{E}\big{\Vert}X\big{\Vert}^2$, and the second inequality is due to Jensen inequality:
	\begin{align*}
		\big\Vert\textbf{w}_{k,t}-\overline{\textbf{w}}_{k,t_0}\big\Vert^2&=\big{\Vert}\sum_{t=t_0}^{t}\eta_t\nabla F_k(\textbf{w}_{k,t},\xi_{k,t})\big{\Vert}^2\\
		&\leq (t-t_0)\sum_{t=t_0}^{t}\eta_t^2\big{\Vert}\nabla F_k(\textbf{w}_{k,t},\xi_{k,t})\big{\Vert}^2.
	\end{align*}
	The last inequality is due to the assumption. In this way, we have
	\begin{align}
		\begin{aligned}
			&\mathbb{E}\sum_{k=1}^K\frac{1}{K}||\overline{\textbf{v}}_{k,t+1}-\textbf{w}^*||^2\\
			&\leq(1-\mu\eta_t)\sum_{k=1}^K\frac{1}{K}\mathbb{E}\big{\Vert}\overline{\textbf{w}}_{k,t}-\textbf{w}^*\big{\Vert}^2\\
			&\quad+2(E-1)^2\eta_t^2LZ^2+4\beta\eta_t^2\Gamma\\
			&\quad+\eta_t^2\big(lZ^2+\frac{1}{K}(L-l)Z^2\big)\\
			&\quad+\eta_t^2\big(l\sigma^2+\frac{1}{K}(L-l)\sigma^2\big).
		\end{aligned}
	\end{align}
	Let $P=2(E-1)^2LZ^2+4\beta\Gamma+lZ^2+\frac{1}{K}(L-l)Z^2+l\sigma^2+\frac{1}{K}(L-l)\sigma^2$, and $\Delta_{t}=\mathbb{E}\sum_{k=1}^K\frac{1}{K}\big{\Vert}\overline{\textbf{w}}_{k,t}-\textbf{w}^*\big{\Vert}^2$. We assume $\eta_t=\frac{\lambda}{t+\gamma}$ for some $\lambda>\frac{1}{\mu}$ and $\gamma>0$. Now we prove $\Delta_{t}\leq\frac{v}{\gamma+t}$ by induction, where $v=\max\big\{\frac{\lambda^2P}{\lambda\mu-1},(\gamma+1)\Delta_{1}\big\}$. Obviously, the definition of $v$ ensures it holds for $t=1$. For some $t$, 
	
	\begin{align}
		\begin{aligned}
			\Delta_{t+1}\leq& (1-\mu\eta_t)\Delta_{t}+\eta_t^2P\\
			\leq&\Bigg(1-\frac{\lambda\mu}{t+\gamma}\Bigg)\frac{v}{t+\gamma}+\frac{\lambda^2P}{(t+\gamma)^2}\\
			=&\frac{t+\gamma-1}{(t+\gamma)^2}v+\Bigg[\frac{\lambda^2P}{(t+\gamma)^2}-\frac{\lambda\mu-1}{(t+\gamma)^2}v\Bigg]\\
			\leq&\frac{v}{\gamma+t+1}.
		\end{aligned}
	\end{align}
	Then by the $\beta$-smoothness of $F(\cdot)$, 
	\begin{align*}
		\mathbb{E}\sum_{k=1}^{K}F_k(\overline{\textbf{w}}_{k,t})-F^*\leq\frac{\beta}{2}\Delta_{t}\leq\frac{\beta}{2}\frac{v}{\gamma+t}.
	\end{align*}
	Specifically, let $\lambda=\frac{2}{\mu}$, $\alpha=\frac{\beta}{\mu}$ and $\gamma = \max\{8\alpha, E\}-1$, then 
	\begin{align}
		\begin{aligned}
			v&=\max\big\{\frac{\lambda^2P}{\lambda\mu-1},(\gamma+1)\Delta_{1}\big\}\\
			&\leq\frac{\lambda^2P}{\lambda\mu-1}+(\gamma+1)\Delta_{1}\\
			&\leq\frac{4P}{\mu^2}+(\gamma+1)\Delta_{1}.
		\end{aligned}
	\end{align}
Therefore, we have
	\begin{align}
		\begin{aligned}
			\mathbb{E}\frac{1}{K}\sum_{k=1}^{K}F_k(\overline{\textbf{w}}_{k,t})-F^*&\leq\frac{\beta}{2}\frac{v}{\gamma+t}\\
			&\leq\frac{\alpha}{\gamma+t}\Bigg(\frac{2P}{\mu}+\frac{\mu(\gamma+1)}{2}\Delta_{1}\Bigg).
		\end{aligned}
	\end{align}

	\subsection{Proof of Theorem \ref{theorem1}}
	Obviously, from equation \eqref{eqn:shannon} we have that transmission rate monotonically increases with $b_k$. Therefore, the transmission latency reduces if device is allocated with more bandwidth. In this way, the devices which have higher latency should be allocated with more bandwidth, from which have lower latency. To the end, the optimal solution of P2 can be achieved if and only if all bandwidth is allocated and all devices have the same finishing time. As a result, the optimal bandwidth allocation ration should satisfy the following equation:
	\begin{align}
		\label{eq:P_of_t1}
		\begin{aligned}
			\left\{ \begin{array}{l}
				\tau_{\ell_k^*}^{cp}+\frac{D_{\ell_k^*}}{b_kW\log_2(1+\frac{p_k|g_k|^2}{N_0})}=\tau_k^*,\ k\in\mathcal{K};\\
				\sum_{k=1}^{K}b_k=1;\\
				0<b_k<1.\\			
			\end{array}
			\right.
		\end{aligned}
	\end{align}
	
	Solving \eqref{eq:P_of_t1}, we easily have :
	\begin{align}
		b_k^* = \frac{D_{\ell_k^*}}{[\tau_k^*-\tau_{\ell_k^*}^{cp}]W\log_2(1+\frac{p_k|g_k|^2}{N_0})}\nonumber.
	\end{align}
	In this way, we only need to choose the appropriate value of $\tau_k^*$.

\end{document}